\documentclass{article}

\usepackage[main, final]{neurips_2026}
\usepackage[utf8]{inputenc} % allow utf-8 input
\usepackage[T1]{fontenc}    % use 8-bit T1 fonts
\usepackage{hyperref}       % hyperlinks
\usepackage{url}            % simple URL typesetting
\usepackage{booktabs}       % professional-quality tables
\usepackage{amsfonts}       % blackboard math symbols
\usepackage{nicefrac}       % compact symbols for 1/2, etc.
\usepackage{microtype}      % microtypography
\usepackage{xcolor}         % colors
\usepackage{amsmath, amssymb, amsthm}
\usepackage{mathtools}
\usepackage{pifont}

\usepackage{algorithm}
\usepackage{algpseudocode}

\theoremstyle{plain}
\newtheorem{theorem}{Theorem}[section]
\newtheorem{proposition}[theorem]{Proposition}
\newtheorem{lemma}[theorem]{Lemma}
\newtheorem{corollary}[theorem]{Corollary}
\theoremstyle{definition}

\newtheorem{assumption}[theorem]{Assumption}
\theoremstyle{remark}
\newtheorem{remark}[theorem]{Remark}
\newtheorem{claim}[theorem]{Claim}

\DeclareMathOperator*{\E}{\mathbb{E}}
\DeclareMathOperator*{\argmax}{arg\,max}
\newcommand{\NR}{\mathrm{NR}}
\newcommand{\NRtilde}{\widetilde{\mathrm{NR}}}
\newcommand{\Ind}{\mathbf{1}}
\newcommand{\NCB}{\mathrm{NCB}}
\newcommand{\NRhat}{\widehat{\mathrm{NR}}}
\newcommand{\tildeO}{\Tilde{\mathcal{O}}}

\newcommand{\cmark}{\ding{51}}  % checkmark
\newcommand{\xmark}{\ding{55}}  % cross
\title{Nash Social Welfare for Multi Armed Bandits: Trajectory-wise Expected and High Probability Regret}

\author{%
  Avishek Ghosh \\
  Department of Computer Science and Engineering\\
  Indian Institute of Technology, Bombay\\
  \texttt{avishek\_ghosh@iitb.ac.in} \\
}

\begin{document}

\maketitle

\begin{abstract}
We study fair multi-armed bandits under the Nash Social Welfare (NSW) objective, which 
measures performance via the geometric mean of accumulated rewards as a fairness-respecting performance metric. Existing formulations 
define Nash regret as $\mathrm{NR}_T = \mu^\star - (\prod_{t=1}^T \mathbb{E}\mu_{I_t})^{1/T}$, 
where $\mu_{I_t}$ is the mean reward of the recommended arm $I_t$ and $T$ is the learning 
horizon. We observe that, since ensemble Nash regret operates on per-round marginal expectations before applying the geometric mean, it does not capture the joint distribution of rewards across rounds — leaving an aspect of the NSW fairness motivation unaddressed at the trajectory level.

To address this, we propose \emph{trajectory-wise Nash regret} $\widetilde{\mathrm{NR}}_T = 
\mu^\star - \mathbb{E}[(\prod_{t=1}^T \mu_{I_t})^{1/T}]$, where the geometric mean is 
computed over complete sample paths before taking expectations. This formulation captures 
the fairness properties of NSW more faithfully and requires controlling the joint distribution 
of rewards across rounds, rather than merely per-round marginals. By Jensen's inequality 
applied to the concave geometric mean, $\widetilde{\mathrm{NR}}_T \geq \mathrm{NR}_T$, 
making it a strictly stronger metric. We additionally introduce \emph{high probability Nash 
regret} $\widehat{\mathrm{NR}}_T = \mu^\star - (\prod_t \mu_{I_t})^{1/T}$, obtaining the 
first high probability regret bounds in the fair bandits literature. We propose a two-phase algorithm, Round Robin Nash Confidence Bound (\texttt{RR-NCB}), 
combining structured round robin exploration with a Nash confidence bound index policy. We 
establish that $\widetilde{\mathrm{NR}}_T \leq \widetilde{\mathcal{O}}(\sqrt{k\log T/T})$ 
and $\widehat{\mathrm{NR}}_T \leq \widetilde{\mathcal{O}}(\sqrt{k\log(kT/\delta)/T})$ with 
probability $1-\delta$, matching the optimal $\widetilde{\mathcal{O}}(\sqrt{k/T})$ scaling 
despite operating under strictly stronger metrics. Optimality follows from a lower bound 
chain via AM-GM and standard $k$-armed bandit minimax arguments. We validate our theoretical 
findings through simulations.
\end{abstract}

\section{Introduction}
\label{sec:intro}
We consider the canonical framework of multi-armed bandit where an agent is interacting with the environment with $k$ arms for $T$ rounds. At every time instant, the agent pulls an arm out of these and receives a random reward. The rewards received by the agent are iid across rounds with mean reward corresponding to arms $\{1,\ldots,k\}$ be denoted by $\mu_1, \ldots, \mu_k$ apriori unknown. We assume $\mu_i \in (0,1]$ for all $i \in [k]$ and $\mu^\star := \max_{i \in [k]} \mu_i$. Without loss of generality, fix $i^\star \in \argmax_i \mu_i$. Typically, to characterize the performance of a learning algorithm, one considers the notion of cumulative or average regret given by the difference between the mean reward of best arm $\mu^*$ and the arithmetic mean of accumulated rewards over rounds $ \frac{1}{T}\sum_{i=1}^T \E \mu_{I_t}$, where $\{\mu_{I_t}\}_{i=1}^T$ are the recommendations of the leaning algorithm (\cite{Lattimore_Szepesvari_2020,bubeck2012regret}).

However, in welfare economics, it is well understood that the arithmetic mean overlooks the fairness aspect completely (\cite{moulin,kaneko,KANEKO1981173,unreasonable}). This becomes critical in certain applications like clinical trials and drug recommendation. A natural alternative is to use axiomatically-justified welfare function namely Nash Social Welfare (NSW), where one takes geometric mean (instead of arithmetic mean) of the accumulated rewards. As discussed in \cite{moulin,unreasonable} NSW satisfies several fundamental axioms like scale invariance, symmetry, independence of unconcerned agents and the Pigou-Dalton transfer principle (Robinhood principle). In a nutshell, in our context, the Pigou-Dalton principle implies that NSW will increase in the event where a well-off arm (with high reward) transfers $\delta$ reward to a  low reward (value) arm. At the same time, if the relative increase in the value of low reward arm is significantly less than the relative decrease in the value of the well-off arm, NSW does not favor such a transfer, striking a balance between fairness and efficiency. In \cite{kaneko}, it is argued that NSW is the unique welfare function satisfying all the above-mentioned axioms.

To this end \cite{barman} introduces the notion of \emph{fair bandits}, where the goal is to minimize the Nash regret given by $\mathrm{NR}_T = \mu^\star - \![(\prod_{t=1}^T \E \mu_{I_t})^{1/T}] $. Note that this formulation respects the fairness aspect as it considers the geometric mean of the accumulated reward. A learning algorithm with (optimal) regret scaling of $\tildeO(\sqrt{K/T})$ is given for bounded (between $[0,1]$) rewards. Later \cite{sayak} extended this result to capture sub-Gaussian rewards and obtain similar regret. Moreover \cite{krishna2025p} study the $p$ mean regret, where the $\ell_p$ norm of the accumulated reward is considered as a fair metric. Note that when $p \rightarrow 0$, this metric boils down to the NSW.

In all the above mentioned papers, the Nash Social Welfare function is defined 
by taking the expectation over the algorithm's randomness \emph{first} and then 
applying the geometric mean. More formally, at every round $t$, one computes the 
ensemble average $\mathbb{E}[\mu_{I_t}]$ across sample paths, yielding a sequence 
of non-random quantities $(\mathbb{E}[\mu_{I_1}], \ldots, \mathbb{E}[\mu_{I_T}])$, 
to which the geometric mean is then applied. The distinction between this ensemble NSW 
and the arithmetic mean therefore reduces to the choice of aggregation function 
on this \emph{same} non-random sequence: $f_1(z_1,\ldots,z_T) = 
\frac{1}{T}\sum_{i=1}^T z_i$ versus $f_2(z_1,\ldots,z_T) = 
\bigl(\prod_{i=1}^T z_i\bigr)^{1/T}$. Since these are related by a log 
transform, $\log f_2(z_1,\ldots,z_T) = f_1(\log z_1,\ldots,\log z_T)$, 
optimizing $f_2$ over policies is equivalent to optimizing $f_1$ on 
log-transformed marginals. This suggests that, at the level of per-round 
marginal expectations, ensemble Nash regret and the arithmetic mean objective 
may share more structural similarity than the NSW motivation implies --- and 
crucially, neither captures the \emph{joint} distribution of rewards across 
rounds, which is where trajectory-level fairness resides. 

To address this, we propose a modified NSW that moves the expectation operator 
\emph{outside} the geometric mean: $
    \widetilde{\mathrm{NSW}}_T 
    = \mathbb{E} [(\prod_{t=1}^T \mu_{I_t})^{\!1/T}]$, where the geometric mean is computed over a complete sample path first, and then 
averaged across trajectories. Unlike the ensemble formulation, this requires 
controlling the joint distribution of $(\mu_{I_1}, \ldots, \mu_{I_T})$ rather 
than merely the per-round marginals $\mathbb{E}[\mu_{I_t}]$, and no deterministic 
log-transform reduces it to an arithmetic mean. 

By Jensen's inequality applied to the concave geometric mean function, we have $\widetilde{\mathrm{NSW}}_T \leq \mathrm{NSW}_T$. Since we look at the complete trajectory, the corresponding regret is defined as Trajectory-wise Nash regret and the we call the original Nash regret as Ensemble Nash regret. We have, 
\[
   \NRtilde_T \;=\; \mu^\star \;-\; \E\![(\prod_{t=1}^T \mu_{I_t})^{\!1/T}] \geq \mathrm{NR}_T.
\]
$\NRtilde_T$ is a stronger metric compared to $\mathrm{NR}_T$ and upper bounds on $\mathrm{NR}_T$ does not translate.

Moreover, in this paper we also study the random variable $(\prod_{t=1}^T \mu_{I_t})^{\!1/T}$ trajectory-wise and obtain high probability regret bound. There are important settings where  averaging is not appropriate. In a clinical trial, for instance, each patient population experiences a single, irreversible sequence of treatment decisions — there is no ensemble of parallel runs to average over. What matters is the welfare of the actual trajectory, not its expectation. To the best of our knowledge, this is the first work to obtain a high probability regret on fair bandits. We define
\begin{align*}
    \NRhat_T := \mu^\star - \bigl(\prod_t \mu_{I_t}\bigr)^{1/T}\!.
\end{align*}

\subsection{Summary of Contributions}
We summarize our contribution in Table~\ref{table:comp}. We mention the details here.
\begin{table}[t!]
\label{table:comp}
\centering
\begin{tabular}{lcccc}
\toprule
& \textbf{Ensemble} & \textbf{Trajectory-wise} & \textbf{High Prob.} & \textbf{Regret} \\
& \textbf{Regret}   & \textbf{Regret}          & \textbf{Regret}    & \textbf{Scaling}\\
\midrule
\cite{barman}  & \cmark & \xmark & \xmark & $\Tilde{\mathcal{O}}(\sqrt{K/T})$ \\
\cite{krishna2025p}  & \cmark & \xmark & \xmark & $\Tilde{\mathcal{O}}(\sqrt{K/T})$ \\
\cite{sayak}   & \cmark & \xmark & \xmark & $\Tilde{\mathcal{O}}(\sqrt{K/T})$ \\
This paper           & \cmark & \cmark & \cmark & $\Tilde{\mathcal{O}}(\sqrt{K/T})$ \\
\bottomrule
\end{tabular}
\caption{Comparison of regret guarantees across methods. \cite{barman,sayak,krishna2025p} only works with ensemble regret, whereas we explicitly handle trajectory-wise and high probability regret. Since $\NRtilde_T \geq \mathrm{NR}_T$, our framework also gives an upper bound for ensemble regret.}
\end{table}

\begin{itemize}
\item \emph{Matching Regret Upper Bound:} We propose a learning algorithm, namely Round Robbin Nash Confidence Bound (\texttt{RR-NCB}) which uses round robbin exploration along with an index policy governed by the Nash confidence bound (see Algorithm~\ref{algo:main}) for details). We show that 
\[
\NRtilde_T \leq  \tildeO\!\bigl(\sqrt{\tfrac{k\log T}{T}} \bigr)
\quad\text{and}\quad
\Pr\!\Bigl[\NRhat_T \le \tildeO\!\bigl(\sqrt{\tfrac{k\log(8kT/\delta)}{T}}\bigr) \Bigr] \ge 1 - \delta,
\]
where $\tildeO(.)$ hides other log factors. Note that the regret scaling is $\tildeO(\sqrt{k/T})$ (ignoring log factor), which matches the regret upper bound of \cite{barman}, \cite{sayak} and \cite{krishna2025p} (for $p \rightarrow 0$). This implies that although $\NRtilde_T \geq \mathrm{NR}_T$, we were able to obtain the same regret scaling. Hence, the trajectory-wise Nash regret, which is a stronger metric comes with no additional cost in terms of regret. The same conclusion holds for high probability regret bound as well. To the best our knowledge no previous work has obtained high probability Nash regret.

\item \emph{Matching Regret Lower Bound:} Invoking the Jensen's inequality, we have the following chain of inequalities:
\begin{align*}
    \NRtilde_T \geq \mathrm{NR}_T \geq \mu^* - \frac{1}{T}\sum_{t=1}^T \E \mu_{I_{I_t}} \geq \Omega(\sqrt{k/T}),
\end{align*}
where the second inequality comes from AM-GM inequality and the third inequality is the standard lower bound for $k$-armed bandits (see \cite{Lattimore_Szepesvari_2020,bubeck2012regret}). Hence, the regret achieved by \texttt{RR-NCB} is optimal. Similar argument holds for the high probability bound as well. We have
\begin{align*}
    \NRhat_T \geq \mu^* - \frac{1}{T}\sum_{t=1}^T \mu_{I_{I_t}} \geq \Omega \bigl(\sqrt{k/T}\log(1/\delta)\bigr),
\end{align*}
with probability at least $1-\delta$. Note that this is a sample path-wise (trajectory-wise) regret. Hence \texttt{RR-NCB} yields optimal regret for both Trajectory-wise  $\NRtilde_T$ and high probability $\NRhat_T$ Nash regret.

\item \emph{Simulations:} We validate out theoretical finding through simulations. In particular we show that trajectory-wise average $\NRtilde_T$ yields worse regret compared to $\mathrm{NR}_T$ as predicted by theory. However, we show that by playing \texttt{RR-NCB}, the gap can be made small. The details are deferred to Section~\ref{sec:sim}. 
\end{itemize}

\subsection{Technical Novelty}
The analysis of Trajectory-wise and high probability Nash regret comes with additional technical challenges. Here we succinctly summarize a few of them.

\begin{itemize}
\item  \emph{Trajectory-wise Jensen-on-exp identity:} The geometric mean satisfies an \emph{exact} trajectory wise identity:
\[
   \Bigl(\textstyle\prod_t \mu_{I_t}\Bigr)^{\!1/T} \;=\; \mu^\star\cdot \exp(-\widehat R_T) \quad \text{on every sample path,}
\]
where $\widehat R_T = (1/T)\sum_t\log(\mu^\star/\mu_{I_t})$ is the realized cumulative log-regret. Now, we use Jensen's inequality on the exponential function. Combined with the identity $1 - e^{-x} \le x$, we obtain $\NRtilde_T \le \mu^\star\bar R_T$ where $\bar R_T = \E[\widehat R_T]$.

On the other hand, for high probability regret, we can skip the Jensen step and directly invoke $1 - e^{-x} \le x$ to obtain $\NRhat_T \le \mu^\star \widehat R_T$. Hence, the high-probability analysis of $\NRhat_T$ is cleaner than the in-expectation analysis of $\NRtilde_T$. By contrast, controlling $\widehat{NR}_T$ via concentrating $(\prod_t \mu_{I_t})^{1/T}$ around its expectation (the naive route) would be substantially more involved.

\item  \emph{Unified analysis and Log regret as unifying intermediate:} We argue that both regret are reduced to the same intermediate object — the cumulative log-regret. This enables us to give a unified analysis that holds for both trajectory-wise and high probability Nash regret.

\item  \emph{Round Robbin Exploration:} In \cite{barman,sayak}, thanks to the expectation inside geometric mean, a simple uniform exploration gives $\E \mu_{I_t} \geq \mu^*/k$ trivially. However, such a lower bound cannot be applied in out setup. Hence, in our learning algorithm \texttt{RR-NCB}, we design the first phase as round robbin exploration. We design the length of such exploration carefully and include a stopping criteria so that every arm gets pulled sufficiently and at the same time the length of exploration period is not too large (otherwise the regret would be impacted). 

\item \emph{Good Event and bad arms:} We define a good event $E$ judiciously and show (in Lemma~\ref{lem:lem11} that on  $E$, bad arms (with mean reward $\mu_j \le \mu^\star/64$) are never played in Phase 2. This crucially controls the regret in Phase 2. Moreover, under $E$, the mean estimate of the good arms (with $\mu_j > \mu^\star/64$) stay close to the actual mean, incurring small regret.
\end{itemize} 

\subsection{Motivating Applications of Trajectory-wise Regret}
\emph{Clinical Trials:} The trajectory-level view is more appropriate when we have \emph {non-replayability} like in clinical trials (the motivating example of \cite{barman,sayak}). Here, each
patient is treated exactly once. The sequence of treatments administered is a single trajectory. A metric that averages over multiple instantiations of the algorithm’s randomization does not reflect what actually happens to the population of patients in any given run. In such cases, a high probability per-trajectory regret is of importance, which our framework provides (see high probability Nash regret, Theorem~\ref{thm:hp}).

\emph{Risk-Sensitive Control:} In classical risk-sensitive control (\cite{risk-howard,whittle1990risk} as well as the Linear-Exponential-Quadratic-Gaussian framework (\cite{biswas2023ergodic}) the objective is to minimize the expected cumulative cost given by $\E (\exp(-\theta C))$, were $\theta$ is the risk sensitive parameter and $C$ is the cost incurred in one trajectory. As discussed in \cite{biswas2023ergodic}, the expectation outside encodes risk sensitivity, and reducing it to $\exp(-\theta \E C)$ collapses into risk neutral setup.

\emph{Portfolio Optimization and Kelly Criteria: }
Consider a sequential investment problem where at each round $t$, a fraction of wealth is allocated
to asset $I_t$ yielding a (multiplicative) return $R_t$. After $T$ rounds, terminal wealth is $W_T = \prod_{t=1}^T R_t$. The Kelly criterion \cite{kelly} maximizes $\E W_T$  which is equivalent to trajectory-wise expected regret.

\emph{Ergodicity Economics:} 
 \cite{peters2019ergodicity} argues that for non-ergodic multiplicative processes, the time-average growth rate $\lim_{T\to\infty}(\prod_t X_t)^{1/T}$ differs from the ensemble average $\mathbb{E}[X_t]$, and rational agents should optimize the former.

\emph{Evolutionary Biology:} The geometric mean fitness principle (\cite{dempster1955maintenance,lewontin1969population}) states that long-run reproductive success in varying environments is governed by $\mathbb{E}[(\prod_t W_t)^{1/T}]$, not $(\prod_t \mathbb{E}[W_t])^{1/T}$, where $W_t$ is the fitness in generation $t$. Strategies producing occasional near-zero fitness are selected against even if their arithmetic mean fitness is high.

\section{Other Related Work}
\label{sec:related}
\emph{Fairness in Online Learning and Bandits:}
\cite{joseph2016fairness} initiates meritocratic fairness in multi-armed bandits, requiring
that a worse arm is never preferred over a better one, with tight regret bounds.
\cite{joseph2018fair} extends this to contextual bandits under a Rawlsian notion with
polynomial regret bounds. \cite{gillen2018online} studies individual fairness in linear
contextual bandits with an unknown Mahalanobis similarity metric, learning constraints from
weak regulatory feedback. \cite{wang2021fairness} formalizes exposure fairness in stochastic
bandits, designing algorithms that jointly minimize reward and fairness regret.

\emph{Nash Regret in Other Settings:}
Building on \cite{barman}, \cite{sawarni2023nash} extends Nash regret to stochastic linear
bandits with tight upper bounds via log-transformed reward maximization.
\cite{sarkar2026improved} resolves the open problem of dimension-optimal Nash regret in
linear bandits and initiates the study of $p$-mean regret, proposing \textsc{FairLinBandit}.
\cite{zhang2024noregret} extends the NSW objective to multi-agent bandits with sublinear
regret guarantees and matching lower bounds in both stochastic and adversarial settings.

\emph{Axiomatic Fairness in Mathematical Economics:}
\cite{nash1950bargaining} introduces the Nash bargaining solution via axioms of symmetry,
scale invariance, independence of irrelevant alternatives, and Pareto optimality.
\cite{kaneko} proves NSW is the \emph{unique} social welfare function satisfying classical
social choice axioms without interpersonal utility comparisons. \cite{moulin} formalizes the
Pigou-Dalton transfer principle, positioning NSW between utilitarian and egalitarian
objectives. \cite{unreasonable} shows maximum Nash welfare simultaneously achieves EF1 and
Pareto optimality. \cite{cousins2023fairpac} characterizes the $p$-mean welfare family in a
machine learning context, identifying NSW as its unique fair member.

\section{Problem Formulation}
\label{sec:setup}
We work with the canonical stochastic bandit model. where the sample space is $\Omega = [k]^T \times [0,1]^{k\times T}$ equipped with the product $\sigma$-algebra. An element $\omega = ((I_1, \ldots, I_T), (Y_{i,s})_{i \in [k], s \in [T]})$ specifies the algorithm's pull sequence and a $Y_{i,s}$ is the  reward of arm $i$ on its $s$-th pull. Throughout, we assume that rewards $Y_{i,s}$ are in $[0,1]$ for all $i\in [k]$ and $s\in [T]$. Moreover, the rewards obtained from each arm are i.i.d.\ with with mean $\mu_i$ for the $i$-th arm. Recall that we assume $\mu_i \in (0,1]$ for all $i \in [k]$.

Any learning algorithm $\mathcal{A}$ recommends arm $I_t$ for the learner at time $t$ based on past history. As mentioned in Section~\ref{sec:intro}, the objective is to characterize the \emph{trajectory-wise Nash regret} as well as the \emph{high probability Nash regret} given respectively by
\[
   \NRtilde_T \;=\; \mu^\star \;-\; \E\![(\prod_{t=1}^T \mu_{I_t})^{\!1/T}] \geq \mathrm{NR}_T, \quad \NRhat_T := \mu^\star - \bigl(\prod_t \mu_{I_t}\bigr)^{1/T}\!.
\]
First, we show that a positive floor assumption on $\mu_i$ is needed for tractability.

\begin{assumption}[Positivity floor]\label{ass:floor}
There exists a constant $\mu_{\min} \in (0, \mu^\star]$ such that $\mu_i \ge \mu_{\min}$ for every $i \in [k]$.
\end{assumption}
We now argue that such assumption is unavoidable for handling the trajectory-wise and high probability Nash regret. The following proposition formally shows that any algorithm $\mathcal{A}$ which pulls every arm with positive probability at least once in $T$ rounds, the modified Nash regret cannot go to zero, and hence learning is not possible.

\begin{proposition}\label{prop:floor-needed}
Fix $k \ge 2$. For any algorithm $\mathcal{A}$ that pulls every arm with positive probability at least once over $T$ rounds, there exists a bandit instance $\nu$ with $\mu^\star = 1$ and $\min_i \mu_i = 0$ on which
\[
   \NRtilde_T(\mathcal{A}, \nu) \;\ge\; \mu^\star \cdot \Pr \!\bigl[\exists\, t \le T:\, I_t = i_0\bigr],
\]
where $i_0$ is the zero-mean arm. In particular, no learning algorithm achieves $\NRtilde_T \to 0$ in $T$ uniformly over instances allowing zero-mean arms.
\end{proposition}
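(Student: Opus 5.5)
The plan is to build an explicit instance and then use the fact that the geometric mean along a sample path vanishes as soon as the zero-mean arm is pulled even once. Fix $k \ge 2$ and take $\nu$ with $\mu_1 = 1$ (so $\mu^\star = 1$, $i^\star = 1$) and $\mu_{i_0} = 0$ for some $i_0 \neq 1$. The remaining arms, if any, get arbitrary means in $(0,1]$, say $1$. Concrete reward distributions are deterministic rewards equal to the means; any $[0,1]$-valued distributions with these means would work. The algorithm $\mathcal{A}$ is arbitrary subject to the hypothesis, so in particular $p := \Pr[\exists\, t \le T : I_t = i_0] > 0$ on $\nu$.

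Define the event $A := \{\exists\, t \le T : I_t = i_0\}$ on the canonical sample space. On $A$, one factor of $\prod_{t=1}^T \mu_{I_t}$ equals $\mu_{i_0} = 0$, so $(\prod_t \mu_{I_t})^{1/T} = 0$ pointwise. On $A^c$, every factor lies in $(0,1]$, and therefore $(\prod_t \mu_{I_t})^{1/T} \le \mu^\star$. Splitting the expectation gives
\[
\E\!\Bigl[\bigl(\textstyle\prod_t \mu_{I_t}\bigr)^{1/T}\Bigr] \le 0\cdot \Pr[A] + \mu^\star \Pr[A^c] = \mu^\star(1-p),
\]
and hence $\NRtilde_T(\mathcal{A},\nu) \ge \mu^\star - \mu^\star(1-p) = \mu^\star p$, which is the displayed inequality. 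The only measurability point is that $A$ is a finite union of the events $\{I_t = i_0\}$, so it is measurable.

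For the ``in particular'' clause, I would argue that $p$ cannot be driven to zero uniformly without breaking learning on a permuted instance. Consider the instance $\nu'$ that agrees with $\nu$ except that arm $i_0$ has mean $1$ and arm $1$ has mean $0$, or more simply gives arm $i_0$ a mean exceeding the others. If an algorithm pulls $i_0$ with vanishing probability on $\nu$, then by a change-of-measure argument (the two instances are indistinguishable until $i_0$ is pulled), it also pulls $i_0$ rarely on $\nu'$. On $\nu'$ it then incurs constant regret, since $\NRtilde_T \ge \mathrm{NR}_T \ge \mu^\star - \frac1T\sum_t \E\mu_{I_t}$ by the chain stated in the introduction. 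So no algorithm achieves $\NRtilde_T \to 0$ on both instances.

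I expect this last step to be the main obstacle. It needs a quantitative coupling, namely that the trajectories on $\nu$ and $\nu'$ coincide on the event that $i_0$ is never pulled. The cleanest way is to use deterministic rewards, so that the laws of $(I_1,\dots,I_\tau)$ agree up to the first pull of $i_0$. Then $\Pr_\nu[A] = \Pr_{\nu'}[A]$, and on $\nu'$ the arm $i_0$ is uniquely optimal by a constant margin. This forces either $p$ bounded below, giving regret at least $\mu^\star p$ on $\nu$, or constant regret on $\nu'$.
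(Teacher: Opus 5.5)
Your proof of the displayed inequality is correct and is exactly the paper's argument: the geometric mean vanishes pointwise on $A=\{\exists\, t\le T: I_t=i_0\}$ and is at most $\mu^\star$ on $A^c$.

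The gap is in the ``in particular'' clause. The paper itself gives only a one-line remark there, and your coupling dichotomy is the right way to make it rigorous, but the instances you chose cannot realize it. In your $\nu$, arm $1$ (indeed every arm other than $i_0$) has mean $1$. Suppose $\nu'$ differs from $\nu$ only at arm $i_0$, which is what ``indistinguishable until $i_0$ is pulled'' requires. Then $\mu'_{i_0}\le 1=\mu'_1$, so $i_0$ is not uniquely optimal by any margin, and never pulling it costs nothing on $\nu'$. Your swap variant ($\mu'_{i_0}=1$, $\mu'_1=0$) changes arm $1$ as well. The two trajectories then separate at the first pull of arm $1$, and $\Pr_\nu[A]=\Pr_{\nu'}[A]$ no longer holds. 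Either way the dichotomy does not close.

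The fix is to drop the normalization $\mu^\star=1$ for this clause. That is legitimate, because the clause quantifies over all instances with means in $[0,1]$. Let $\nu_0$ have $\mu_{i_0}=0$ and every other mean equal to $1/2$, and let $\nu'$ agree with $\nu_0$ except $\mu'_{i_0}=1$. Your coupling (shared internal randomness, identical reward streams on arms other than $i_0$) now gives $\Pr_{\nu_0}[A]=\Pr_{\nu'}[A]=:p$. The first part yields $\NRtilde_T(\nu_0)\ge p/2$. On $\nu'$ the path-wise geometric mean equals $1/2$ on $A^c$ and is at most $1$ on $A$, so $\NRtilde_T(\nu')\ge 1-\tfrac12(1-p)-p=\tfrac{1-p}{2}$. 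Hence $\max\{\NRtilde_T(\nu_0),\NRtilde_T(\nu')\}\ge 1/4$ for every $T$ and every algorithm. If you keep your route via $\mu^\star-\frac1T\sum_t\E\mu_{I_t}$, note that the coupling controls $\Pr[A]$, not the number of pulls $N_{i_0}$ of arm $i_0$. Use $N_{i_0}/T\le\Ind_A$ to pass from one to the other.
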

\begin{remark}[Assumption~\ref{ass:floor} for High Probability Regret]
    Without Assumption~\ref{ass:floor}, on the event that an algorithm pulls a zero-mean arm at least once with non-zero probability, $\NRhat_T = \mu^\star$ exactly. Hence, no high-probability bound below $\mu^\star$ is possible without the floor. 
\end{remark}
\begin{remark}[Such Assumption is not required for ensemble Nash regret of \cite{barman,sayak}]
    This pathology disappears for the ensemble $\NR_T$, where the expectation is taken \emph{before} the geometric mean. Hence, a single zero-mean round is averaged out into a $\Theta(1/T)$ contribution to $\E[\mu_{I_t}]$.
\end{remark}
\begin{remark}[No Impact on Regret (order-wise)]
We see (in Theorems~\ref{thm:exp}, \ref{thm:hp} that the floor enters our bound only logarithmically and only in the lower-order $1/T$ term.  Hence, such assumption (i.e., a low value of $\mu_{\min}$ does not impact the regret scaling. \end{remark}

\begin{remark}[Knowledge of $\mu_{\min}$ not needed]
We see in Section~\ref{sec:algo}, that the knowledge of $\mu_{\min}$ is not required to run the learning algorithm. This is just for theoretical tractability.
\end{remark}

\section{Algorithm}
\label{sec:algo}
We propose the Round Robbin Nash Confidence Bound algorithm (\texttt{RR-NCB}), parameterized by a confidence parameter $L > 0$ in place of $\log T$. Set $c = 3$ and $S := c^2 L /\mu^\star$ throughout.

We use the following notation throughout. The algorithm observes $Y_{I_t, n_{I_t}(t)}$ at round $t$, where $n_i(t) := \sum_{s=1}^t \Ind\{I_s = i\}$. We write $\widehat\mu_{i,s} := \frac{1}{s}\sum_{j=1}^s Y_{i,j}$ for the empirical mean of arm $i$ from its first $s$ rewards, and abbreviate $n_i=n_i(t)$, $\widehat\mu_i = \widehat\mu_{i, n_i(t)}$ when $t$ is clear from context.

\begin{algorithm}[t!]
\label{algo:main}
\caption{Round Robbin Nash Confidence Bound (\texttt{RR-NCB})}\label{alg:ncb}
\begin{algorithmic}[1]
\Require Number of arms $k$, horizon $T$, confidence parameter $L > 0$; constants $c=3$.
\State \textbf{Phase 1 (Round-robin exploration).}
\State $t \gets 0$
\While {$\max_{i \in [k]} \, n_i \cdot \widehat{\mu}_i \;\leq \; 420\, c^2 L$}
   \State Pull each arm $i \in [k]$ in round robbin fashion.
   \State Update $n_i$ and $\widehat\mu_i$.
   \State $t \gets t + 1$
\EndWhile
\State Let $\tau$ denote the round at which the stopping condition first holds.
\State \textbf{Phase 2 (NCB selection).}
\For{$t = \tau+1, \ldots, T$}
   \State Set $\NCB_i(t) := \widehat\mu_i + 2c\sqrt{2\widehat\mu_i L /n_i}$ for each arm $i$.
   \State Pull $I_t \in \argmax_i \NCB_i(t)$
   \State Observe $X_t$; update $n_{I_t}, \widehat\mu_{I_t}$.
\EndFor
\end{algorithmic}
\end{algorithm}

We will instantiate Algorithm~\ref{alg:ncb} twice: (a) with $L = \log T$, for the in-expectation bound on $\NRtilde_T$, and (b)
with $L = L_\delta := \log(8kT/\delta)$, for the high-probability bound on $\NRhat_T$.

The algorithm is a two-phase procedure for the $k$-armed stochastic bandit problem over a known horizon $T$, parameterized by a confidence parameter $L > 0$ and a constant $c = 3$. We modify the Nash Confidence Bound (NCB) algorithm of \cite{barman}, adapted to support both in-expectation and high-probability Nash regret guarantees through a single unified design. 
 
\paragraph{Phase 1: Round-robin exploration.} The algorithm begins by exploring all arms uniformly. In each pass, it pulls every arm $i \in \{1,\ldots,k\}$ exactly once in a round robbin fashion, then checks whether the stopping condition 
\begin{align*}
    \max_{i} n_i \cdot \widehat\mu_i > 420\, c^2 L
\end{align*}
is satisfied, where $n_i$ is the total number of times arm $i$ has been pulled so far and $\widehat\mu_i$ is its empirical mean. If the condition is not met and the horizon has not been reached, another pass begins. This round-robin schedule ensures that all arms have been pulled the same number of times. The stopping time $\tau$ is the first round at which the condition is met.
 
The stopping condition serves a dual purpose. First, it guarantees that Phase~1 does not end too early: the threshold $420\,c^2 L$ is large enough that, on the good event, no arm can cross it before round $192\,kS$ (where $S := c^2 L/\mu^\star$), ensuring every arm accumulates at least $192\,S$ samples. This sample-count floor is what makes Phase~2's concentration-based analysis viable. Second, the condition guarantees that Phase~1 does not run too long: the optimal arm's product $n_{i^\star}\widehat\mu_{i^\star}$ must cross the threshold by round $484\,kS$, keeping Phase~1's contribution to the regret bounded by $O(kL\log(1/\mu_{\min})/T)$.
 
\paragraph{Phase 2: NCB selection.} From round $\tau + 1$ to $T$, the algorithm selects arms greedily by an optimistic index. For each arm $i$, it computes the Nash Confidence Bound (NCB) given by 
\begin{align*}
    \NCB_i := \widehat\mu_i + 2c\sqrt{2\,\widehat\mu_i\, L/n_i}
\end{align*}
and pulls the arm with the largest $\NCB_i$. The index scales with $\sqrt{\widehat\mu_i/n_i}$ rather than $\sqrt{1/n_i}$ similar to \cite{barman}. This ensures that arms whose empirical means are small automatically receive tighter confidence widths, preventing over-exploration. This variance-sensitive design is the key to achieving $O(\sqrt{k/T})$ Nash regret rather than $O(\sqrt{k\log k/T})$. After each pull, the selected arm's count and empirical mean are updated, and the procedure repeats until round~$T$.
 
\section{Main Results}
\label{sec:results}
In this section, we obtain the main result of the paper. Specifically, we show that \texttt{RR-NCB} yields trajectory-wise as well as high probability Nash regret guarantee. We formally state the results now.

\begin{theorem}[Trajectory-wise Expected Regret]\label{thm:exp}
Suppose Assumption~\ref{ass:floor} holds and we run  Algorithm~\ref{alg:ncb} with $L=\log T$, $c = 3$ and horizon $T$. The Trajectory-wise expected regret $\NRtilde_T$ satisfies
\[
  \NRtilde_T \;\le\; C_1 \sqrt{\tfrac{k\log T}{T}} + C_2 \tfrac{k\log T \log(1/\mu_{\min})}{T} + \tfrac{3k\log(1/\mu_{\min})}{T^2}
\]
for absolute constants $C_1, C_2 > 0$.
\end{theorem}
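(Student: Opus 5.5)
We reduce $\NRtilde_T$ to the expected cumulative log-regret, exactly as outlined in the Technical Novelty section. On every sample path, $(\prod_t \mu_{I_t})^{1/T} = \mu^\star \exp(-\widehat R_T)$ with $\widehat R_T = \frac1T\sum_t \log(\mu^\star/\mu_{I_t}) \ge 0$. Jensen's inequality for the convex map $x\mapsto e^{-x}$ gives $\E[e^{-\widehat R_T}] \ge e^{-\bar R_T}$, where $\bar R_T=\E\widehat R_T$. Combined with $1-e^{-x}\le x$, this yields $\NRtilde_T \le \mu^\star \bar R_T$. It therefore suffices to bound $\E\sum_t \log(\mu^\star/\mu_{I_t})$ by $C_1\sqrt{kT\log T}/\mu^\star$-type terms plus lower-order terms. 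Note that $\mu^\star\le 1$, and that Assumption~\ref{ass:floor} makes every summand at most $\log(1/\mu_{\min})$.

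Next we define a good event $E$ on which, for every arm $i$ and every sample count $s\le T$, a multiplicative Chernoff/Bernstein bound holds. Concretely, we take $|\widehat\mu_{i,s}-\mu_i| \le c\sqrt{2\mu_i L/s}$ for arms with $\mu_i$ not too small relative to $L/s$, together with an upper-tail bound $\widehat\mu_{i,s}\le \mu^\star/32$-type for bad arms ($\mu_i\le\mu^\star/64$) once $s\ge 192S$. A union bound over $k T$ pairs gives $\Pr[E^c]\le 3k/T^3$ (say) with $L=\log T$. On $E^c$ we bound the log-regret trivially by $T\log(1/\mu_{\min})$. After dividing by $T$ and multiplying by $\mu^\star\le1$, this produces the third term $3k\log(1/\mu_{\min})/T^2$.

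On $E$ we split the analysis by phase. For Phase 1, we show that on $E$ no arm has $n_i\widehat\mu_i>420c^2L$ before each arm has $192S$ pulls. This uses the upper concentration together with $\mu_i\le\mu^\star$. We also show that $n_{i^\star}\widehat\mu_{i^\star}$ exceeds the threshold once $n_{i^\star}\ge 484S$, using the lower concentration. Hence $\tau\le 484kS = O(kL/\mu^\star)$, and the Phase 1 contribution to $\sum_t\log(\mu^\star/\mu_{I_t})$ is at most $484kS\log(1/\mu_{\min})$. Multiplying by $\mu^\star/T$ gives the $C_2 k\log T\log(1/\mu_{\min})/T$ term, since the factor $1/\mu^\star$ in $S$ cancels against $\mu^\star$.

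For Phase 2 we invoke Lemma~\ref{lem:lem11}: on $E$, bad arms are never pulled. For the good arms ($\mu_j>\mu^\star/64$), the standard NCB argument applies. Optimism gives $\NCB_{i^\star}\ge\mu^\star$ on $E$. Whenever $j$ is pulled, $\mu^\star\le \NCB_j \le \mu_j + O(\sqrt{\mu_j L/n_j})$, using $\widehat\mu_j\le 2\mu_j$ on $E$. For good arms, $\log(\mu^\star/\mu_j)\le (\mu^\star-\mu_j)/\mu_j\cdot O(1)$, because the ratio $\mu^\star/\mu_j$ is bounded by $64$. This gives a per-pull log-regret of $O(\sqrt{L/(\mu_j n_j)})\le O(\sqrt{L/(\mu^\star n_j)})$. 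Summing over pulls gives $\sum_j O(\sqrt{L n_j(T)/\mu^\star})$, and Cauchy–Schwarz bounds this by $O(\sqrt{kTL/\mu^\star})$. Multiplying by $\mu^\star/T$ gives $O(\sqrt{\mu^\star kL/T})\le C_1\sqrt{k\log T/T}$.

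The main obstacle is the Phase 1 sandwich $192kS\le\tau\le484kS$ together with the precise form of $E$. The concentration must be multiplicative, scaling with $\sqrt{\mu_i}$ rather than with $1$, so that the threshold $420c^2L$ is calibrated correctly. Without this, the $1/\mu^\star$ factor in $S$ would not cancel. I would first establish this via the Bernstein/relative Chernoff bound $\Pr[|\widehat\mu_{i,s}-\mu_i|\ge\epsilon\mu_i]\le 2e^{-s\mu_i\epsilon^2/3}$. I would apply it with $s\mu_i\gtrsim L$, and handle arms with $s\mu_i\lesssim L$ by the one-sided bound $s\widehat\mu_{i,s}\le O(L)$.
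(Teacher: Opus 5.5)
Your proposal is correct and follows essentially the same route as the paper. The shared steps are: the pathwise identity plus Jensen on $\exp$ to reduce to expected log-regret; a good event combining multiplicative Chernoff concentration for arms with $\mu_i>\mu^\star/64$ and an upper-tail bound $\widehat\mu_{j,s}<\mu^\star/32$ for bad arms; the Phase~1 sandwich $192kS\le\tau\le 484kS$; exclusion of bad arms in Phase~2 via optimism of $i^\star$; linearization of $\log(\mu^\star/\mu_i)$; a per-pull gap summed with $\sum_j 1/\sqrt{j}\le 2\sqrt{m}$ and Cauchy--Schwarz; and a trivial $\log(1/\mu_{\min})$ bound off $E$.

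The differences are cosmetic. Your extra $\sqrt{2}$ in the confidence width only makes $\Pr[E^c]$ smaller. You handle small sample counts by a one-sided tail bound, where the paper simply uses that $n\widehat\mu_{i,n}$ is non-decreasing in $n$.
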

The proof is deferred to the Appendix. A few remarks are in order.
\begin{remark} (Matching regret with \cite{barman,sayak}
    The regret $\NRtilde_T$ consists of three terms, out of which the dominating term is $\tildeO(\sqrt{k/T})$. The regret scaling matches with the (ensemble) Nash regret of \cite{barman,sayak}. This shows that although we are working with a stronger notion of regret (as $\NRtilde_T \geq \mathrm{NR}_T$), we obtain the same result (order-wise). In other words, trajectory-wise or sample path based analysis comes with no additional cost (in terms of regret).  
\end{remark}
\begin{remark} (Matching Regret Lower Bound)
    Since $\NRtilde_T \geq \mathrm{NR}_T$, a simple application of AM-GM inequality shows that $\NRtilde_T = \Omega(\sqrt{k/T})$, which implies that \texttt{RR-NCB} is order-wise optimal.
\end{remark}
\begin{remark} (Dependence on $\mu_{\min}$)
    We see that although $\NRtilde_T$ depends on the positivity floor $\mu_{\min}$, the dependence is present only in minor terms (i.e., the terms scaling with $1/T$ and $1/T^2$). Hence, the influence of $\mu_{\min}$ on regret is minor. In Section~\ref{sec:sim}, we validate this finding through simulations. 
\end{remark}

We now present the high probability Nash regret. We have the following result.
\begin{theorem}[High Probability Regret]\label{thm:hp}
Let $\delta \in (0, 1)$ and we run Algorithm~\ref{alg:ncb} with $L = \log(8kT/\delta)$. Under Assumption~\ref{ass:floor}, we have
\[
  \Pr\!\left[\,\NRhat_T \;\le\; C' \sqrt{\tfrac{k \log(kT/\delta)}{T}} \;+\; C'' \,\tfrac{k\,\log(kT/\delta)\,\log(1/\mu_{\min})}{T}\,\right] \;\ge\; 1 - \delta,
\]
for absolute constants $C', C'' > 0$.
\end{theorem}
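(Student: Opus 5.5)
The plan is to reduce $\NRhat_T$ to realized cumulative log-regret and then bound that log-regret on a single good event. On every sample path we have the exact identity
\[
\Bigl(\textstyle\prod_t \mu_{I_t}\Bigr)^{1/T}=\mu^\star\exp(-\widehat R_T),\qquad \widehat R_T=\tfrac1T\textstyle\sum_t\log(\mu^\star/\mu_{I_t}).
\]
Combined with $1-e^{-x}\le x$, this gives $\NRhat_T\le \mu^\star\widehat R_T$ pathwise. No Jensen step is needed, so it suffices to show that with probability $1-\delta$,
\[
\mu^\star\widehat R_T\le C'\sqrt{kL/T}+C''kL\log(1/\mu_{\min})/T,\qquad L=\log(8kT/\delta).
\]

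\textbf{Good event.} Define $E$ as the event that, for every arm $i$ and every sample count $s\le T$, the empirical mean $\widehat\mu_{i,s}$ satisfies a multiplicative/Bernstein-type deviation bound. Concretely:
\begin{itemize}
\item $|\widehat\mu_{i,s}-\mu_i|\le c\sqrt{\mu_i L/s}$ whenever $\mu_i s\gtrsim L$;
\item $\widehat\mu_{i,s}\le \mu_i+ cL/s$ type bounds for small-mean arms;
\item a matching lower-tail bound for $i^\star$.
\end{itemize}
This uses Chernoff bounds for $[0,1]$ variables. Because the sample space indexes rewards by pull count, a union bound over $k$ arms, $T$ counts and a constant number (up to $8$) of tail statements gives $\Pr[E]\ge 1-\delta$ with $L=\log(8kT/\delta)$. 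All remaining steps are deterministic on $E$.

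\textbf{Phase 1.} On $E$ I will show $192kS\le\tau\le 484kS$, where $S=c^2L/\mu^\star$.
\begin{itemize}
\item \emph{Lower bound on $\tau$.} Before $n=192S$ passes, each $n\widehat\mu_i\le n\mu_i+c\sqrt{n\mu_i L}+\dots\le 192c^2L+\text{lower order}<420c^2L$. So every arm receives at least $192S$ samples.
\item \emph{Upper bound on $\tau$.} At $n=484S$, the lower tail for $i^\star$ gives $n\widehat\mu_{i^\star}\ge 484c^2L-c\sqrt{484}\,c^2L\cdot(1/c)\dots>420c^2L$. Hence $\tau\le 484kS$.
\end{itemize}
Each round of Phase 1 contributes at most $\log(1/\mu_{\min})$ to $T\widehat R_T$, since $\mu^\star\le 1$ and Assumption~\ref{ass:floor} holds. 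Multiplying by $\mu^\star$, the total Phase 1 contribution is
\[
\mu^\star\cdot 484kS\log(1/\mu_{\min})/T=O\bigl(kL\log(1/\mu_{\min})/T\bigr),
\]
which is the second term of the bound.

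\textbf{Phase 2.} On $E$, with every arm holding at least $192S$ samples, I will argue two things.
\begin{itemize}
\item \emph{Optimism.} $\NCB_{i^\star}(t)\ge\mu^\star$.
\item \emph{No bad arms.} Arms with $\mu_j\le\mu^\star/64$ are never selected (Lemma~\ref{lem:lem11}). Their NCB is at most $\mu_j+O(\sqrt{\mu^\star L/n_j})$ plus $O(L/n_j)$, and since $n_j\ge192S$ this is below $\mu^\star$.
\end{itemize}
For a good arm $j$ pulled at round $t$, optimism and concentration give
\[
\mu^\star-\mu_j\le \NCB_j-\mu_j\le C\sqrt{\mu_j L/n_j}\le C\sqrt{\mu^\star L/n_j}.
\]
Since $\mu_j\ge\mu^\star/64$, the elementary inequality $\log(\mu^\star/\mu_j)\le 64(\mu^\star-\mu_j)/\mu^\star$ turns this into $\mu^\star\log(\mu^\star/\mu_{I_t})\le C''' \sqrt{\mu^\star L/n_{I_t}(t)}$. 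Summing over Phase 2 rounds and grouping by arm, each arm's sum of $n^{-1/2}$ is at most $2\sqrt{n_j(T)}$. Cauchy–Schwarz then gives $\sum_j\sqrt{n_j(T)}\le\sqrt{kT}$. Dividing by $T$ yields $O(\sqrt{kL/T})$, the leading term, with no $\mu_{\min}$ dependence.

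\textbf{Main obstacle.} I expect the hardest step to be the Phase 1 stopping-time sandwich. The lower-tail control of $n\widehat\mu_{i^\star}$ and the upper control of all $n\widehat\mu_i$ must be uniform in $n$, and the specific constants $192$, $420$ and $484$ must be checked against the chosen Chernoff widths. In particular, the small-mean arms need the additive $L/s$ term in place of the $\sqrt{\mu_iL/s}$ width. If the constants fail, I would first adjust the form of the deviation bound in $E$, using a multiplicative Chernoff bound with $\epsilon=1/2$, rather than change the algorithm.
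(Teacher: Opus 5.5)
Your proposal is correct and follows essentially the same route as the paper: the pathwise bound $\NRhat_T\le\mu^\star\widehat R_T$, one good event of probability at least $1-\delta$ at $L=\log(8kT/\delta)$, and the Phase~1 sandwich $192kS\le\tau\le 484kS$. Phase~2 then proceeds in the same order, via optimism of $i^\star$, exclusion of arms with $\mu_j\le\mu^\star/64$, the per-pull gap, log-linearization and Cauchy--Schwarz. The one difference is how low-mean arms and uniformity in $n$ are handled. The paper uses the threshold event $E_3$ ($\widehat\mu_{j,s}<\mu^\star/32$ for $s\ge 64S$), and uses monotonicity of $n\widehat\mu_{i,n}$ so that only the counts $192S$ and $484S$ matter. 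Your additive $L/s$ term works equally well, provided you choose its constant large enough for the union bound to close; the thresholds $420c^2L$ and $\mu^\star$ leave ample slack for this.
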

The formal proof is deferred to the Appendix. We collect a few remarks here.
\begin{remark} (Matching regret upper and lower bounds)
Similar to above, the high probability regret is $\tildeO(\sqrt{k/T})$ which matches \cite{barman,sayak} and is optimal (via same AM-GM inequality).    
\end{remark}
\begin{remark}(Novel high probability Nash regret)
    We point out that this is the first result obtaining high probability Nash regret, which is of importance in safety critical applications.
\end{remark}
\section{Proof Sketch}
\label{sec:sketch}
Recall that we have $k$-armed stochastic bandit with means $\mu_i \in [\mu_{\min}, 1]$, $\mu^\star = \max_i \mu_i$, horizon $T$. We bound two variants of Nash regret:
\[
   \NRtilde_T := \mu^\star - \E\!\bigl[(\textstyle\prod_t\mu_{I_t})^{1/T}\bigr], \qquad
   \NRhat_T := \mu^\star - (\textstyle\prod_t\mu_{I_t})^{1/T}.
\]
Our proposed algorithm, \texttt{RR-NCB} (given in \ref{alg:ncb}) has 2 phases. In Phase I agent pull arms in a round-robin fashion until $\max_i n_i\widehat\mu_i > 420\,c^2L$ ($c=3$). In Phase II, we play the arm with maximum index given by $\NCB_i = \widehat\mu_i + 2c\sqrt{2\widehat\mu_i L/n_i}$ in Phase~2. We set $L = \log T$ for $\NRtilde_T$, $L = L_\delta = \log(8kT/\delta)$ for $\NRhat_T$, and $S = c^2 L/\mu^\star$. We now provide step by step reduction.
 
\medskip\noindent\textit{Step 1: Reduction to Trajectory-wise log-regret:} The first step in the proof is to reduce the variants of Nash regret to log-regret trajectory-wise.  Since $(\prod_t\mu_{I_t})^{1/T} = \exp\bigl(\log\mu^\star - \widehat R_T\bigr)$ trajectory-wise, where $\widehat R_T := \tfrac{1}{T}\sum_t\log(\mu^\star/\mu_{I_t}) \ge 0$, the inequality $1 - e^{-x} \le x$ gives \emph{trajectory-wise} $\NRhat_T \le \mu^\star\widehat R_T$. For the in-expectation case, we apply Jensen inequality on the exponential convex function, which gives $\NRtilde_T \le \mu^\star\bar R_T$, where $\bar R_T = \E[\widehat R_T]$.

\medskip\noindent\textit{Step 2: Define a Good Event $E$:} We define $E := E_2 \cap E_3$ that supports the entire analysis. The definitions are parameterized by $L > 0$, with $S := c^2 L /\mu^\star$ with $c=3$.

\begin{description}
\item[$E_2$ (concentration of high-mean arms):] For all arms $i$ with $\mu_i > \mu^\star/64$ and all sample counts $s \in [64S, T], \quad |\widehat\mu_{i,s} - \mu_i| \;\le\; c\sqrt{\tfrac{\mu_i L}{s}}$.
\item[$E_3$ (low-mean arms stay low):] For all arms $j$ with $\mu_j \le \mu^\star/64$ and all sample counts $s \in [64S, T], \quad \widehat\mu_{j,s} \;<\; \tfrac{\mu^\star}{32}$.
\end{description}

We show that on  good event $E$, we have 
\begin{align*}
    \Pr[E] \ge 1-3k/T^2 \quad \text{for} \quad  L=\log T, \quad \Pr[E]\ge 1-\delta \quad \text{for} \quad L = L_\delta
\end{align*}
\medskip\noindent\textit{Step 3: Regret split into Phase 1 and Phase 2}:
\begin{itemize}
\item \emph{Phase 1} ($t \le \tau$): each $\log(\mu^\star/\mu_{I_t}) \le \log(1/\mu_{\min})$, so the regret contribution is at most $\mu^\star \cdot \tau \log(1/\mu_{\min})/T$.
\item \emph{Phase 2} ($t > \tau$): We show (in Lemma~\ref{lem:lem11} that on event $E$, bad arms are never played in Phase 2. In particular every arm $j$ with $\mu_j \le \mu^\star/64$ is never pulled in Phase~2. The condition on pulled arm $\mu_{I_t} \ge \mu^\star/64$  lets us linearize, $\log(\mu^\star/\mu_i) \le 64(\mu^\star-\mu_i)/\mu^\star$, reducing Phase 2's log-regret to $(64/T)\sum_{t>\tau}(\mu^\star - \mu_{I_t})$. This is the standard cumulative gap and is handled via concentration arguments.
\end{itemize}
 
\medskip\noindent\textit{Step 4: Bound on Phase 1 Termination:} Two auxiliary lemmas pin down the duration of Phase I:
\begin{itemize}
\item Lemma~\ref{lem:lem7}: At $\tau \le 192 kS$, $n\widehat\mu_{i,n} < 420\,c^2 L$ (proved by splitting on $\mu_i \lessgtr \mu^\star/64$ and using $E_2,E_3$). Hence, Phase-1 hasn't stopped yet.
\item Lemma~\ref{lem:lem8}: At $\tau \ge 484 kS$, $n\widehat\mu_{i^\star,n} > 420\,c^2 L$ (using $E_2$ at the optimal arm). So, Phase 1 has definitely stopped.
\end{itemize}
% Round-robin makes counts equal across arms at every pass-end, so at round $128 kS$ every arm has count $\le 128 S < 192 S$ and Lemma 7 gives $n_i\widehat\mu_i < 210 c^2 L < 420 c^2 L$ — \emph{not yet stopped}. At round $\lceil 968 S \rceil k$, the optimal arm has count $\ge 968 S \ge 484 S$ and Lemma 8 gives $n_{i^\star}\widehat\mu_{i^\star} \ge 462 c^2 L > 420 c^2 L$ — \emph{stopped}. Hence $128 kS < \tau \le 970 kS$, and every arm enters Phase 2 with count $> 128 S$.
 
\medskip\noindent\textit{Step 5: $\sqrt{kT}$ NCB analysis.} We characterize the per-pull gap in Lemma~\ref{lem:lem12}. In particular, 
\begin{align*}
    \mu^\star - \mu_i \le 4c\sqrt{\mu^\star L/(T_i-1)}
\end{align*}
for any arm pulled in Phase 2. This is obtained by exploiting Lemma~\ref{lem:lem10} ($\NCB_{i^\star} \ge \mu^\star$) along with the event $E_2$. Using this, we show that the $j$-th pull of arm $i$ in Phase 2 yields a maximum regret of $4c\sqrt{\mu^\star L/j}$. Summing over all the pulls and applying the Cauchy-Schwartz inequality, we obtain
\begin{align*}
    \sum_{t>\tau}(\mu^\star - \mu_{I_t}) \le 8c\sqrt{kT\mu^\star L}.
\end{align*}
\medskip\noindent\textit{Step 5: Combine.} Phase 1 contributes $\le C_1 kL\log(1/\mu_{\min})/T$, Phase 2 contributes $\le C_2\sqrt{kL/T}$. Multiplying by $\mu^\star \le 1$ and adding:
\[
   \NRtilde_T,\ \NRhat_T \;=\; O\!\Bigl(\sqrt{\tfrac{kL}{T}} \;+\; \tfrac{kL\log(1/\mu_{\min})}{T}\Bigr).
\]
For the Trajectory-wise expected regret, we also calculate the regret off event $E$. By using the high probability of $E$, this term yields a minor $\mathcal{O}(1/T^2)$ term.
\section{Simulations}
\label{sec:sim}

\begin{figure}[t!]
    \centering
    \includegraphics[width=0.32\textwidth]{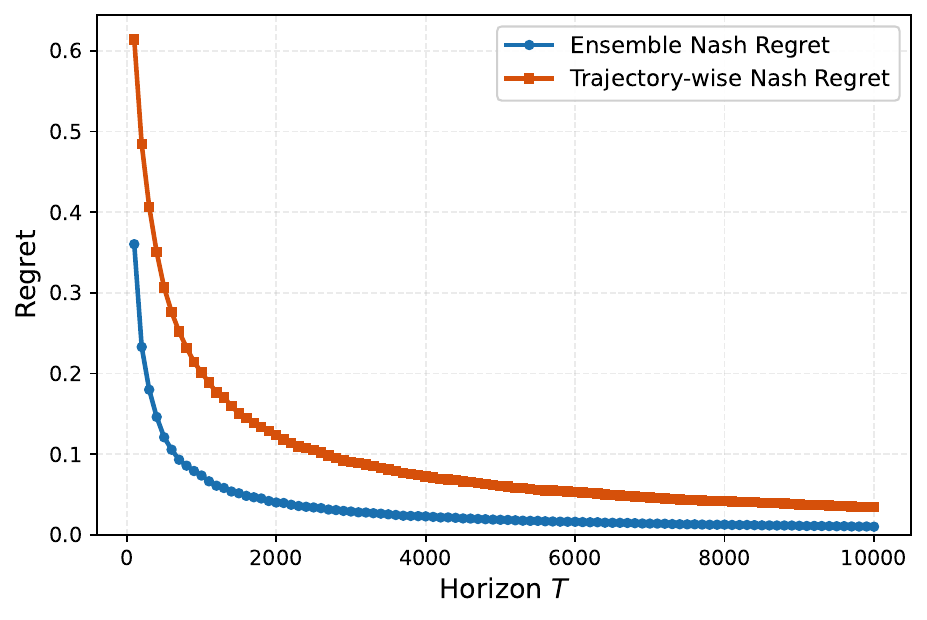}
    \hfill
    \includegraphics[width=0.32\textwidth]{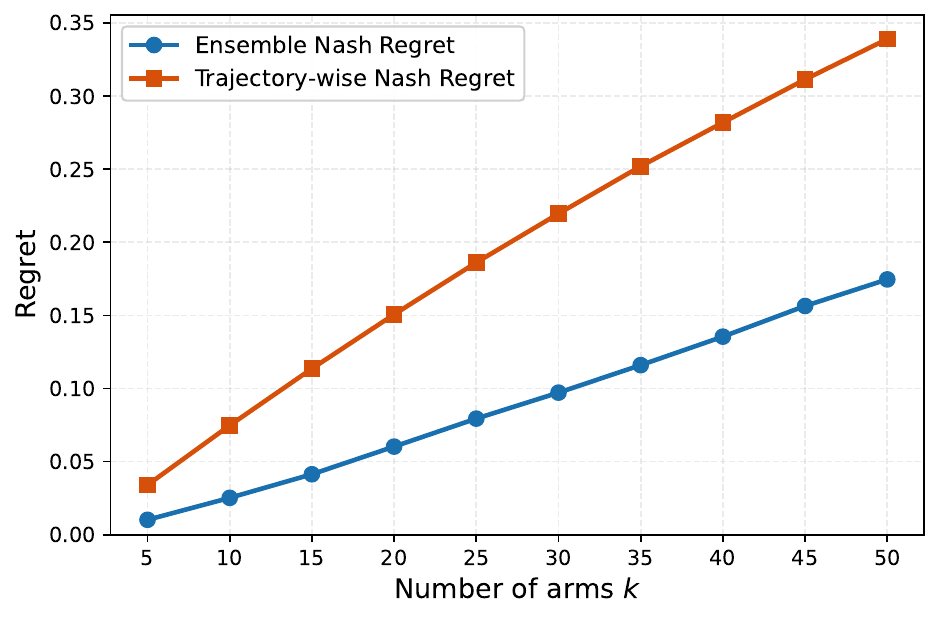}
    \hfill
    \includegraphics[width=0.32\textwidth]{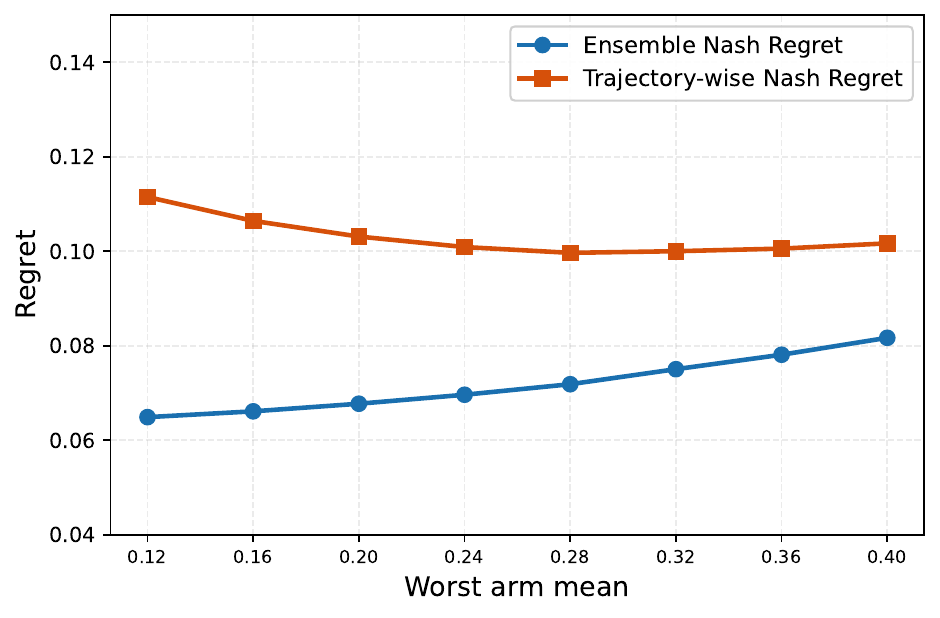}
    \caption{Nash Regret comparisons. \textit{Left:} Regret vs.\ horizon $T$. \textit{Center:} Regret vs.\ number of arms $k$. \textit{Right:} Regret vs.\ worst arm mean.}
    \label{fig:nash_regret}
\end{figure}

We conduct three sets of simulations on a
stochastic multi-armed bandit instance with the best arm mean $\mu^{\star} = 0.9$ and worst arm mean $\varepsilon$ (we vary this). We evaluate two notions of Nash regret across 200 independent seeds: the
\emph{ensemble Nash regret} (of \cite{barman}), computed by averaging pulled-arm means across seeds at each round
before taking the geometric mean over rounds, and the \emph{trajectory-wise Nash regret}, computed by taking the geometric mean of realized rewards along each individual trajectory and
then averaging across seeds.

In the first experiment we fix $k = 25$ arms and $\varepsilon = 0.1$, and vary the horizon
$T \in \{100, 200, \ldots, 10{,}000\}$.
In the second experiment we fix $T = 10{,}000$ and $\varepsilon = 0.1$, and vary the number
of arms $k \in \{5, 10, 15, \ldots, 50\}$.
In the third experiment we fix $T = 10{,}000$ and $k = 25$, and vary the worst-arm mean
$\varepsilon$ across 10 values from $0.12$ to $0.40$ in increments of $0.04$.
 
% UCB1 pulls each arm once in the first $k$ rounds and subsequently selects the arm maximizing
% $\hat{\mu}_i + \sqrt{2 \log t\,/\,n_i}$, where $\hat{\mu}_i$ and $n_i$ are the empirical mean
% and pull count of arm $i$ at round $t$.
% This algorithm is well-suited for demonstrating the distinction between the two regret notions
% because it is adaptive---it rapidly concentrates pulls on the best arm---so both the horizon and
% the number of arms meaningfully affect the pull distribution, and the impact of bad-arm pulls on
% trajectory geometric means remains visible throughout the learning process.
 
\subsection{Results and Insights}
The results are shown in Figure~\ref{fig:nash_regret}. In the regret vs.\ $T$ experiment, we compare the performance of \texttt{RR-NCB} to  \texttt{NCB} of \cite{barman}. We see that both regrets go to zero implying learnability and sub-linear regret. Moreover, as theory predicts, the trajectory-wise Nash regret is bigger than the standard ensemble Nash regret. However as $T$ gets large, the gap between the two reduces. This validates the theoretical finding that both the algorithms achieve similar order-wise regret.

We then compare the Nash regret against the number of arms $k$, and unsurprisingly, we got an increasing curve. To understand the impact of the positivity error floor assumption in our formulation, we run another experiment where we choose $10$ bandit instances with varying arm mean for the worst arm. We take the mean of the worst arm as a proxy for positivity error floor (see Assumption~\ref{ass:floor}). We observe (in Figure~\ref{fig:nash_regret}) that the trajectory-wise Nash regret is not changing much from one instance to other. This observation further validates the theoretical finding that the effect of this minimum floor assumption is in the minor order regret terms, and poses no major influence on the trajectory-wise Nash regret. We run \texttt{NCB} of \cite{barman} in the same setting as well. As expected the ensemble Nash regret is not changing much as well, which substantiate the fact that the ensemble Nash regret does not depend on such positivity floor.

\section{Conclusion and Open Problems}
In this paper we study fair multi-armed bandits from a trajectory (sample path) point of view (both in expectation and high probability). We propose and analyze a learning algorithm which yields optimal regret guarantees which we validate through simulations.

We end this paper with a few open problems. Until now, all the guarantees on Nash regret are instance independent. Can we obtain instance dependent optimal logarithmic Nash regret? Moreover, how does the trajectory-wise Nash regret scale in the presence of multiple agents (in a cooperative or competitive setup)? Furthermore, for structured bandits (like linear, kernelized), can we obtain regret guarantees for trajectory-wise Nash regret? We keep these questions for our future endeavors.

\bibliography{ref}
\bibliographystyle{plainnat}

\appendix
\newpage
\section{Proofs}
\label{sec:proofs}
\subsection{Setup and Useful Lemmas}
Without loss of generality, throughout the proof we assume
\begin{align*}
    \mu^\star \ge 512\sqrt{kL/T},
\end{align*}
which we call the warm-up condition. When this bound fails, our results hold deterministically with 
\[
\NRtilde_T \le \mu^\star \le 512\sqrt{kL/T}.
\]

Similar bounds holds for $\NRhat_T$. This crucially leverages the non-negativity of the rewards.

\subsection{The Good Event}\label{sec:goodevent}

We define a good event $E := E_2 \cap E_3$ that supports the entire analysis. The definitions are parameterized by $L > 0$, with $S := c^2 L /\mu^\star$ as before. Recall $c=3$.

\begin{description}
\item[$E_2$ (concentration of high-mean arms):] For all arms $i$ with $\mu_i > \mu^\star/64$ and all sample counts $s \in [64S, T]$,
\[
   |\widehat\mu_{i,s} - \mu_i| \;\le\; c\sqrt{\tfrac{\mu_i L}{s}}.
\]
\item[$E_3$ (low-mean arms stay low):] For all arms $j$ with $\mu_j \le \mu^\star/64$ and all sample counts $s \in [64S, T]$,
\[
   \widehat\mu_{j,s} \;<\; \tfrac{\mu^\star}{32}.
\]
\end{description}

\subsection{Round-robin sampling}

Under the round-robin schedule of Algorithm~\ref{alg:ncb}, after any number of rounds $r \ge k$ during Phase~1, each arm $i$ has been pulled either $\lfloor r/k\rfloor$ or $\lceil r/k\rceil$ times. For the sake of simplicity and clarity of exposition we assume that $\tau$ is a multiple of $k$ (otherwise we just use ceil and floor which will only impact the final regret in terms of universal constants). Hence, at round $\tau$ every arm has count $\tau/k$ exactly. We use this throughout below.

\subsection{Probability of the good event}

\begin{lemma}[Good event probability]\label{lem:goodevent}
$\Pr[E^c] \le 3kT\exp(-c^2 L/3)$.
\end{lemma}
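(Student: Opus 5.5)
Bounding $\Pr[E^c]$ comes down to a union bound over arms $i\in[k]$ and sample counts $s\in[64S,T]$, which gives at most $kT$ pairs. For each pair we use a multiplicative (relative) Chernoff bound for i.i.d.\ $[0,1]$-valued rewards with mean $\mu_i$. This yields $\Pr[E_2^c\cup E_3^c]\le \sum_i\sum_s p_{i,s}$, and it suffices to show $p_{i,s}\le 3\exp(-c^2L/3)$ for every pair. The per-pair probabilities actually need only be bounded by $2e^{-c^2L/3}$ or $e^{-c^2L/3}$, so the constant $3$ is slack.

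\emph{High-mean arms ($E_2$).} Fix $i$ with $\mu_i>\mu^\star/64$ and $s\ge 64S$. Set $\epsilon=c\sqrt{L/(\mu_i s)}$, so that the deviation $c\sqrt{\mu_i L/s}$ equals $\epsilon\mu_i$. Since $s\ge 64S=64c^2L/\mu^\star$ and $\mu_i>\mu^\star/64$, we get
\[
\epsilon^2=\frac{c^2L}{\mu_i s}<\frac{64\,c^2L}{\mu^\star\cdot 64S}=1,
\]
so $\epsilon\in(0,1)$. The multiplicative Chernoff bounds for $[0,1]$ variables give:
\begin{itemize}
\item upper tail: $\Pr[\widehat\mu_{i,s}\ge(1+\epsilon)\mu_i]\le\exp(-\epsilon^2\mu_i s/3)$;
\item lower tail: $\Pr[\widehat\mu_{i,s}\le(1-\epsilon)\mu_i]\le\exp(-\epsilon^2\mu_i s/2)$.
\end{itemize}
Both are at most $\exp(-c^2L/3)$, since $\epsilon^2\mu_i s=c^2L$. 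Hence $p_{i,s}\le 2\exp(-c^2L/3)$.

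\emph{Low-mean arms ($E_3$).} Fix $j$ with $\mu_j\le\mu^\star/64$ and $s\ge 64S$. We need $\Pr[\widehat\mu_{j,s}\ge\mu^\star/32]$. The threshold $a=\mu^\star/32$ is at least $2\mu_j$, i.e., at least twice the mean. We reduce to a worst case by stochastic domination: replace the rewards by $[0,1]$ variables with mean $\mu^\star/64$, or equivalently use the Chernoff bound in the form $\Pr[\bar X\ge a]\le\exp(-s\,\mathrm{kl}(a,\mu_j))$, which is monotone in $\mu_j\le a$. With mean $m=\mu^\star/64$ and $\epsilon=1$, the multiplicative bound gives
\[
\exp(-m s/3)\le\exp\Bigl(-\frac{\mu^\star}{64}\cdot\frac{64c^2L}{3\mu^\star}\Bigr)=\exp(-c^2L/3).
\]
This step is the main obstacle. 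The Chernoff upper tail has to be applied at a threshold relative to $\mu^\star$ rather than to $\mu_j$, and $\mu_j$ may be much smaller than $\mu^\star/64$ (even near $\mu_{\min}$). The clean way to handle it is the monotonicity argument: the tail $\Pr[\bar X_s\ge a]$ for $[0,1]$ variables is maximized, over means $\mu_j\le m$, by a Bernoulli($m$) distribution. Alternatively, invoke the bound $\Pr[\bar X\ge(1+\eta)\mu]\le\exp(-\eta^2\mu s/(2+\eta))$ with $(1+\eta)\mu_j=a$. In that case $\eta^2\mu_j/(2+\eta)\ge \eta\mu_j/3\cdot(\eta/(1+\eta))$, and since $\eta\ge1$ this is $\gtrsim a/6$, which gives the exponent $c^2L/3$ after using $a s\ge 2c^2L$.

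\emph{Conclusion.} Summing the per-pair bounds over at most $k$ arms and $T$ values of $s$ gives $\Pr[E^c]\le 2kT\exp(-c^2L/3)\le 3kT\exp(-c^2L/3)$.
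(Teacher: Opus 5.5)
Your proof is correct and follows the paper's argument: a union bound over at most $kT$ (arm, sample-count) pairs, the multiplicative Chernoff bound with $\delta=c\sqrt{L/(\mu_i s)}\le 1$ for $E_2$, and for $E_3$ a Chernoff bound at the surrogate mean $\mu^\star/64$ with $\delta=1$ (the paper's Lemma~\ref{lem:hoeff-ext} is the MGF version of your domination step), while your observation that the two arm classes partition $[k]$ harmlessly sharpens $3kT$ to $2kT$. Two side remarks are inaccurate, though your main argument does not use them: the exact tail $\Pr[\bar X_s\ge a]$ is not in general maximized by a Bernoulli$(m)$ law (already for $s=1$ a two-point law on $\{0,2m\}$ gives tail $1/2\gg m$), only the MGF and hence the Chernoff bound is; and in your alternative route the step $\eta^2\mu_j/(2+\eta)\ge \eta^2\mu_j/(3(1+\eta))$ yields only $(a-\mu_j)^2/(3a)\ge a/12$, so you need $\eta/(2+\eta)\ge 1/3$ directly to reach $a/6$ and the exponent $c^2L/3$.
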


\begin{proof}
We bound $\Pr[E_2^c]$ and $\Pr[E_3^c]$ separately and union-bound.

\emph{$E_2$:} For an arm $i$ with $\mu_i > \mu^\star/64$ and $s \in [64S, T]$, by Lemma~\ref{lem:mult-chernoff} with $\nu = \mu_i$ and $\delta := c\sqrt{L/(\mu_i s)}$, we have
\[
   \Pr\!\left[|\widehat\mu_{i,s} - \mu_i| \ge c\sqrt{\tfrac{\mu_i L}{s}}\right] \le 2\exp\!\Bigl(-\tfrac{\delta^2 s\mu_i}{3}\Bigr) = 2\exp\!\Bigl(-\tfrac{c^2 L}{3}\Bigr).
\]
We require $\delta \le 1$ for Lemma~\ref{lem:mult-chernoff} to apply, i.e., $s \ge c^2 L/\mu_i$. Since $\mu_i > \mu^\star/64$, $c^2 L/\mu_i < 64 c^2 L/\mu^\star = 64S$, and $s \ge 64S$ suffices.

Union-bounding over $k$ arms and $T$ sample counts: $\Pr[E_2^c] \le 2kT \exp(-c^2 L/3)$.

\emph{$E_3$:} For an arm $j$ with $\mu_j \le \mu^\star/64$ and $s \in [64S, T]$, by Lemma~\ref{lem:hoeff-ext} with $\nu_H = \mu^\star/64$ (note $(1+1)\nu_H = \mu^\star/32$), we have
\[
   \Pr\!\left[\widehat\mu_{j,s} \ge \tfrac{\mu^\star}{32}\right] \le \exp\!\Bigl(-\tfrac{1 \cdot s \cdot \mu^\star/64}{3}\Bigr) = \exp\!\Bigl(-\tfrac{s \mu^\star}{192}\Bigr).
\]
For $s \ge 64S$, we have $s\mu^\star/192 \ge 64 c^2 L/192 = c^2 L/3$. So the bound becomes $\exp(-c^2 L/3)$.

Union over $k$ arms and $T$ sample counts: $\Pr[E_3^c] \le kT \exp(-c^2 L/3)$.

Adding the two bounds: $\Pr[E^c] \le 3kT\exp(-c^2 L/3)$.
\end{proof}

\begin{corollary}\label{cor:goodevent-instances}
For $c = 3$:
\begin{enumerate}
\item With $L = \log T$ and $T \ge 3k$, $\Pr[E] \ge 1 - 3k/T^2$.
\item With $L = L_\delta := \log(8kT/\delta)$ and $\delta \in (0,1)$, $\Pr[E] \ge 1 - \delta$.
\end{enumerate}
\end{corollary}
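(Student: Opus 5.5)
The plan is to substitute directly into Lemma~\ref{lem:goodevent}, which gives $\Pr[E^c] \le 3kT\exp(-c^2L/3)$. With $c=3$ the exponent is $c^2L/3 = 3L$. Hence
\[
\Pr[E^c] \le 3kT\,e^{-3L}.
\]
Each item then follows by evaluating $e^{-3L}$ for the prescribed choice of $L$. No further probabilistic argument is needed.

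For item 1, take $L=\log T$, so $e^{-3L}=T^{-3}$. Then
\[
\Pr[E^c]\le 3kT\cdot T^{-3} = \tfrac{3k}{T^2},
\]
which is the first claim. The hypothesis $T\ge 3k$ is not used in this calculation; it only ensures the bound $3k/T^2$ is nontrivial, i.e.\ less than $1$.

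For item 2, take $L=L_\delta=\log(8kT/\delta)$, so $e^{-3L}=(\delta/(8kT))^3$. Then
\[
\Pr[E^c]\le 3kT\cdot\frac{\delta^3}{512\,k^3T^3}=\frac{3\delta^3}{512\,k^2T^2}.
\]
Since $k,T\ge1$ and $\delta\in(0,1)$ implies $\delta^3\le\delta$, this is at most $\tfrac{3}{512}\,\delta\le\delta$, giving $\Pr[E]\ge 1-\delta$.

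I do not expect any step to be a real obstacle. The only point to check is that Lemma~\ref{lem:goodevent} applies for every $L>0$ with no extra side conditions. Its applicability conditions (namely $\delta\le1$ in the multiplicative Chernoff bound, and $s\ge 64S$) were already built into the definition of $E$ through $S=c^2L/\mu^\star$. So they hold for both choices of $L$.
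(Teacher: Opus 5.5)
Your proposal is correct and matches the paper's proof essentially verbatim: you substitute $c=3$ into Lemma~\ref{lem:goodevent} to get $\Pr[E^c]\le 3kT e^{-3L}$, evaluate at $L=\log T$ and at $L=\log(8kT/\delta)$, and close with the same estimate $3\delta^3/(512k^2T^2)\le\delta$. You are also right that the hypothesis $T\ge 3k$ is not used in the computation (the paper does not use it either).
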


\begin{proof}
\emph{(1)} With $L = \log T$ and $c=3$: $\exp(-c^2 L/3) = \exp(-3\log T) = T^{-3}$. So $\Pr[E^c] \le 3k/T^2$.

\emph{(2)} With $L = \log(8kT/\delta)$: $\exp(-c^2 L/3) = (8kT/\delta)^{-3}$. So
\[
   \Pr[E^c] \le \tfrac{3kT \delta^3}{(8kT)^3} = \tfrac{3 \delta^3}{512 (kT)^2} \le \delta,
\]
for any $\delta \in (0,1)$ and $kT \ge 1$.
\end{proof}

\subsection{Analysis of Phase~1}\label{sec:phase1}

We now characterize the number of pulls made in Phase-I. The Phase~1 stopping rule activates when $\max_i n_i \widehat\mu_i > 420 c^2 L$. To pin down $\tau$, we need to know when $n_i \widehat\mu_i$ has not yet crossed the threshold and when it must have. The next two lemmas provide both bounds.

\begin{lemma}
\label{lem:lem7}
On the event $E$, for any arm $i \in [k]$ and any sample count $n \le 192 S$,
\[
   n\,\widehat\mu_{i,n} \;<\; 210\, c^2 L < 420c^2L.
\]
\end{lemma}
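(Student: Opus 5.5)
The argument uses two facts: on $E$ the empirical means are controlled only for sample counts $s \in [64S, T]$, and rewards are nonnegative. Because $Y_{i,j}\in[0,1]$, the partial sum $s\,\widehat\mu_{i,s} = \sum_{j=1}^s Y_{i,j}$ is nondecreasing in $s$. So for any $n \le m$ we have $n\,\widehat\mu_{i,n} \le m\,\widehat\mu_{i,m}$. Small $n$ (below $64S$, where $E$ says nothing) can therefore be handled by comparison with a larger count $m$ inside the controlled range. The naive bound $n\widehat\mu_{i,n}\le n < 64S = 64c^2L/\mu^\star$ does not work, since $1/\mu^\star$ may be large.

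\emph{Step 1: choice of $m$.} Fix an arm $i$ and an integer $n \le 192S$, with $n \le T$ since counts never exceed $T$. Set $m := \min(\lfloor 192 S\rfloor, T)$, so that $n \le m$. We must check $m \ge 64S$ so that $E_2, E_3$ apply at $s=m$.
\begin{itemize}
\item For $\lfloor 192S\rfloor$: we have $\lfloor 192S\rfloor \ge 64S$ as soon as $S \ge 1/128$, which holds since $S = 9L/\mu^\star \ge 9L$.
\item For $T$: the warm-up condition $\mu^\star \ge 512\sqrt{kL/T}$ gives $T \ge 512^2 kL/(\mu^\star)^2$. Since $\mu^\star\le 1$, this is at least $64\cdot 9 L/\mu^\star = 64S$.
\end{itemize}
By monotonicity it then suffices to bound $m\,\widehat\mu_{i,m}$ for this $m$, which satisfies $64S \le m \le 192S$.

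\emph{Step 2: high-mean arm, $\mu_i > \mu^\star/64$.} By $E_2$,
\[
m\widehat\mu_{i,m} \le m\mu_i + c\sqrt{\mu_i L m}.
\]
Using $\mu_i \le \mu^\star$ and $m \le 192S = 192c^2L/\mu^\star$, this is at most
\[
192c^2L + c\sqrt{192\,c^2L^2} = \bigl(192+\sqrt{192}\bigr)c^2L \approx 205.9\,c^2L < 210\,c^2L.
\]

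\emph{Step 3: low-mean arm, $\mu_i \le \mu^\star/64$.} By $E_3$,
\[
m\widehat\mu_{i,m} < m\mu^\star/32 \le 6c^2L < 210c^2L.
\]

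Combining the two cases with $n\widehat\mu_{i,n}\le m\widehat\mu_{i,m}$ gives the claim, and trivially $210c^2L<420c^2L$.

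The main obstacle is the small-$n$ regime. The event $E$ gives no direct information there, and the fix is the monotonicity of partial sums together with verifying that $m$ lies in $[64S,T]$. The latter is exactly where the warm-up condition is needed.
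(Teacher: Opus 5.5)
Your proof is correct and follows the paper's argument. Monotonicity of the partial sums reduces everything to a single count near $192S$, and the cases $\mu_i > \mu^\star/64$ and $\mu_i \le \mu^\star/64$ are handled by $E_2$ (giving $(192+\sqrt{192})c^2L$) and $E_3$ (giving $6c^2L$), exactly as in the paper. Your choice $m=\min(\lfloor 192S\rfloor,T)$, together with the check that $m\in[64S,T]$ via the warm-up condition, adds rigor that the paper skips by treating $192S$ as an in-range integer.
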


\begin{proof}
Since $Y_{i,j} \in [0,1]$ (non-negative), the sum $n\widehat\mu_{i,n} = \sum_{j=1}^n Y_{i,j}$ is non-decreasing in $n$. So it suffices to bound the case $n = N$ where $N = 192 S$. We have
\begin{equation}\label{eq:lem7-mono}
   n\widehat\mu_{i,n} \;\le\; N\widehat\mu_{i,N}.
\end{equation}
Two cases on $\mu_i$.

\emph{Case 1: $\mu_i \le \mu^\star/64$.} Since $N = 192S \ge 64S$, $E_3$ applies and gives $\widehat\mu_{i,N} < \mu^\star/32$. Hence
\[
N\widehat\mu_{i,N} \;<\; 192 S \cdot \tfrac{\mu^\star}{32} \;=\; 6 S \mu^\star \;=\; 6 c^2 L.
\]

\emph{Case 2: $\mu_i > \mu^\star/64$.} Since $N \ge 64 S$, $E_2$ applies:
\[
\widehat\mu_{i,N} \;\le\; \mu_i + c\sqrt{\tfrac{\mu_i L}{N}} \;\le\; \mu^\star + c\sqrt{\tfrac{\mu^\star L}{N}}
\;=\; \mu^\star + c\sqrt{\tfrac{\mu^\star L \cdot \mu^\star}{192 c^2 L}} \;=\; \mu^\star\!\left(1 + \tfrac{1}{\sqrt{192}}\right)\!.
\]
Hence, invoking the definition of $N$, we  have $N\widehat\mu_{i,N} < 210 c^2 L$.

In both cases $N\widehat\mu_{i,N} < 210 c^2 L$, so~\eqref{eq:lem7-mono} gives $n\widehat\mu_{i,n} < 210 c^2 L$.
\end{proof}

\begin{lemma}\label{lem:lem8}
On the event $E$, for any sample count $n \ge 484 S$,
\[
   n\,\widehat\mu_{i^\star, n} \;\ge\; 462\, c^2 L > 420 c^2L.
\]
\end{lemma}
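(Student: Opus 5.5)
The plan is to apply the lower tail of $E_2$ at the optimal arm $i^\star$. Since $\mu_{i^\star} = \mu^\star > \mu^\star/64$, $E_2$ covers $i^\star$ for every sample count $s \in [64S, T]$. Any $n \ge 484 S$ satisfies $n \ge 64S$, so (for $n \le T$, which is all we need since counts never exceed $T$) we get
\[
   \widehat\mu_{i^\star,n} \;\ge\; \mu^\star - c\sqrt{\tfrac{\mu^\star L}{n}}.
\]
Multiplying by $n$ gives
\[
   n\widehat\mu_{i^\star,n} \ge n\mu^\star - c\sqrt{n\mu^\star L}.
\]

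Next, I would show that the right-hand side is increasing in $n$ on the relevant range, so it suffices to evaluate it at $n = N := 484 S$. Write $f(n) = n\mu^\star - c\sqrt{\mu^\star L}\,\sqrt n$. This is a quadratic in $\sqrt n$. Its derivative in $\sqrt n$ is nonnegative once $\sqrt n \ge c\sqrt{L/\mu^\star}/2$, that is, once $n \ge S/4$. This holds comfortably for $n \ge 484S$, so $f(n) \ge f(N)$.

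At $N = 484 S = 484c^2L/\mu^\star$, we have $N\mu^\star = 484c^2L$ and
\[
   c\sqrt{N\mu^\star L} = c\sqrt{484c^2L^2} = 22c^2L.
\]
Hence $f(N) = 484c^2L - 22c^2L = 462c^2L$, which exceeds $420c^2L$.

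The only mild obstacle is the monotonicity step, since $f$ is not monotone for small $n$. I would handle it either by the derivative computation above or by noting that for $n \ge N$ we have $\sqrt{n\mu^\star L} \le \sqrt{n/N}\cdot n\mu^\star\sqrt{L/(N\mu^\star)}$. This gives $f(n) \ge n\mu^\star(1 - 1/22) \ge N\mu^\star\cdot 21/22 = 462c^2L$ directly.

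Alternatively, one could use the monotonicity of $n\widehat\mu_{i^\star,n}$ in $n$ (nonnegative rewards, as in Lemma~\ref{lem:lem7}) and apply $E_2$ only at $n = N$. That route needs $N \le T$; the warm-up condition suffices to guarantee this in the regime of interest.
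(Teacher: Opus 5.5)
Your proof is correct. Its core is the same as the paper's: the lower tail of $E_2$ at $i^\star$ together with the arithmetic $484-22=462$.

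The difference is where monotonicity comes from. The paper uses that $n\widehat\mu_{i^\star,n}=\sum_{j\le n}Y_{i^\star,j}$ is nondecreasing, which relies on nonnegative rewards, and invokes $E_2$ only once, at $M=484S$. You instead invoke $E_2$ at $n$ itself. You then use that the deterministic bound $f(n)=n\mu^\star-c\sqrt{n\mu^\star L}$ is increasing for $n\ge S/4$.

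What each buys:
\begin{itemize}
\item Your main route does not need nonnegativity of the rewards. It also never treats $484S$ as a sample count, so it avoids the paper's silent assumptions that $484S$ is an integer and that $484S\le T$.
\item The paper's route is a one-liner, but it strictly requires $E_2$ at $M$ (or at $\lceil M\rceil$).
\item Neither route actually needs the warm-up condition. If some admissible count $n\ge 484S$ exists, then $\lceil 484S\rceil\le n\le T$ automatically.
\end{itemize}

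There is one slip in your second monotonicity argument. The correct identity is $\sqrt{n\mu^\star L}=\sqrt{N/n}\cdot n\mu^\star\sqrt{L/(N\mu^\star)}$, with $\sqrt{N/n}\le 1$, not $\sqrt{n/N}$. As you wrote it, the inequality only yields $c\sqrt{n\mu^\star L}\le\sqrt{n/N}\,n\mu^\star/22$, which does not give $f(n)\ge\tfrac{21}{22}n\mu^\star$. What you want is $c\sqrt{L/(n\mu^\star)}\le c\sqrt{L/(N\mu^\star)}=1/22$. Your derivative argument is correct on its own, so the proof stands.
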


\begin{proof}
By monotonicity, $n\widehat\mu_{i^\star,n} \ge M\widehat\mu_{i^\star,M}$ where $M = 484 S$. Since $\mu_{i^\star} = \mu^\star > \mu^\star/64$ and $M \ge 64S$, $E_2$ applies and we have
\[
\widehat\mu_{i^\star,M} \;\ge\; \mu^\star - c\sqrt{\tfrac{\mu^\star L}{M}} \;=\; \mu^\star - c\sqrt{\tfrac{\mu^\star L \cdot \mu^\star}{484 c^2 L}} \;=\; \mu^\star\!\left(1 - \tfrac{1}{\sqrt{484}}\right) \;=\; \tfrac{21}{22}\mu^\star.
\]
Hence $M\widehat\mu_{i^\star,M} \ge 462 c^2 L$.
\end{proof}

\subsubsection{Bounding the Phase~1 stopping time}
We use the above 2 Lemmas to bound $\tau$ under event $E$. We have
\begin{align*}
    192kS \leq \tau \leq 484kS (=T_e).
\end{align*}

\begin{corollary}[Phase~1 sample counts]\label{cor:phase1samples}
On $E$, at the end of Phase~1, every arm $i$ has been pulled exactly $\tau/k$ times, with $\tau/k > 192 S > 64 S$. In particular, every arm satisfies $n_i(\tau) > 192 c^2 L/\mu^\star$.
\end{corollary}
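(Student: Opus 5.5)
The plan is to combine the round-robin structure of Phase~1 with Lemma~\ref{lem:lem7}, arguing by contradiction on the value of $\tau$. Work on the event $E$. By the simplifying convention of the round-robin subsection, $\tau$ is a multiple of $k$, and at every pass-end $r$ each arm has count exactly $r/k$. In particular, at the stopping round every arm has been pulled exactly $\tau/k$ times. This gives the first claim directly; it remains only to lower bound $\tau/k$.

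Suppose, for contradiction, that $\tau/k \le 192S$. The stopping condition is checked only at pass-ends, and $\tau$ is by definition the first pass-end at which $\max_i n_i\widehat\mu_i > 420c^2L$. So at round $\tau$ there is some arm $i$ with
\[
n_i(\tau)\,\widehat\mu_{i,n_i(\tau)} = (\tau/k)\,\widehat\mu_{i,\tau/k} > 420c^2L .
\]
But $n := \tau/k \le 192S$, so Lemma~\ref{lem:lem7} gives $n\widehat\mu_{i,n} < 210c^2L < 420c^2L$ for every arm $i$. This is a contradiction, so $\tau/k > 192S$.

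The remaining claims follow at once. Since $S>0$, we have $192S > 64S$. Substituting $S = c^2L/\mu^\star$ gives $n_i(\tau) = \tau/k > 192c^2L/\mu^\star$ for every arm.

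Two points need care. First, Phase~1 must actually terminate before the horizon, so that $\tau$ is well defined. I would invoke Lemma~\ref{lem:lem8}: at the pass-end with count $\lceil 484S\rceil$, the optimal arm exceeds the threshold. Together with the warm-up condition, which makes $484kS \le T$ for large $T$, this shows $\tau$ exists. Second, I expect the main subtlety to be the non-integrality of $192S$ and the case where $\tau$ is not a multiple of $k$. Lemma~\ref{lem:lem7} holds for all $n \le 192S$, so taking $n=\tau/k$ (or its floor or ceiling) under the paper's divisibility convention handles this, with constants absorbing any rounding.
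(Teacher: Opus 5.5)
Your proposal is correct and follows essentially the same route as the paper: under the divisibility convention the round-robin counts are all equal to $\tau/k$ at the stopping pass-end, and Lemma~\ref{lem:lem7} forbids the stopping condition from firing while that common count is at most $192S$, so $\tau/k > 192S$; Lemma~\ref{lem:lem8} together with the warm-up condition (which in fact gives $484kS \le T$ for every $T$, not only for large $T$) ensures $\tau$ is well defined. Your contradiction formulation also yields the strict inequality the corollary claims, whereas the paper's displayed bound $192kS \le \tau$ is stated only non-strictly.
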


% \begin{proof}
% By Lemma~\ref{lem:phase1len}, $\tau > 128 kS$. Since $\tau$ is a multiple of $k$ (the algorithm only checks the stopping condition at the end of full round-robin passes), and round-robin sampling makes counts equal across arms, $n_i^{(1)} = \tau/k > 128 S$ for every $i$.
% \end{proof}

\subsection{Analysis of Phase~2: NCB-Based Selection}\label{sec:phase2}

Throughout this section we work on the good event $E$, and we denote $S = c^2 L/\mu^\star$ as before. By Corollary~\ref{cor:phase1samples}, on $E$ every arm $i$ has $n_i > 192 S > 64 S$ before the start of Phase~2 (the $\ge 64S$ form is what we use most often as it is defined in the event $E$).

\begin{lemma}[Optimal arm's NCB stays above $\mu^\star$]\label{lem:lem10}
On the event $E$ and assuming $\mu^\star \ge 512\sqrt{kL/T}$, for every Phase~2 round $t > \tau$,
\[
   \NCB_{i^\star}(t) \;\ge\; \mu^\star.
\]
\end{lemma}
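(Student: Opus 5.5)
The plan is a direct two-sided use of the concentration event $E_2$ at the optimal arm, combined with the Phase~1 sample-count floor. Fix a Phase~2 round $t>\tau$ and write $n := n_{i^\star}(t)$ and $\widehat\mu := \widehat\mu_{i^\star,n}$. Counts never decrease, so Corollary~\ref{cor:phase1samples} gives $n \ge \tau/k > 192S \ge 64S$. Trivially $n \le T$, and $\mu_{i^\star}=\mu^\star>\mu^\star/64$. Hence $E_2$ applies to arm $i^\star$ at sample count $n$:
\[
   |\widehat\mu - \mu^\star| \;\le\; \varepsilon := c\sqrt{\tfrac{\mu^\star L}{n}}.
\]

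The first step is to show that $\varepsilon$ is a small fraction of $\mu^\star$. Using $n > 192S = 192c^2L/\mu^\star$, I get
\[
   \varepsilon \;<\; c\sqrt{\tfrac{\mu^\star L\,\mu^\star}{192c^2L}} \;=\; \tfrac{\mu^\star}{\sqrt{192}}.
\]
Therefore $\widehat\mu \ge \mu^\star(1-1/\sqrt{192}) \ge \mu^\star/2$, which is the same computation as in Lemma~\ref{lem:lem8}.

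The second step is to lower-bound the exploration bonus by a multiple of $\varepsilon$. Since $\widehat\mu \ge \mu^\star/2$, we have $2\widehat\mu \ge \mu^\star$, and so
\[
   2c\sqrt{\tfrac{2\widehat\mu L}{n}} \;\ge\; 2c\sqrt{\tfrac{\mu^\star L}{n}} \;=\; 2\varepsilon.
\]
Combining the two steps,
\[
   \NCB_{i^\star}(t) = \widehat\mu + 2c\sqrt{2\widehat\mu L/n} \;\ge\; (\mu^\star-\varepsilon) + 2\varepsilon \;\ge\; \mu^\star.
\]

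The only real subtlety is that the bonus is computed with the empirical mean $\widehat\mu$ rather than the true mean $\mu^\star$. This is exactly why the multiplicative lower bound $\widehat\mu \ge \mu^\star/2$ must come first, and it in turn relies on the $192S$ sample floor from Phase~1. The warm-up condition $\mu^\star \ge 512\sqrt{kL/T}$ does not appear to be needed in this argument. It seems to matter only for ensuring that the Phase~1 analysis is consistent with the horizon, i.e.\ that $\tau \le 484kS \le T$, so I would cite it only for that purpose.
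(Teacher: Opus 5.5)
Your proof is correct and follows essentially the same route as the paper. Both apply $E_2$ to the optimal arm, use the Phase~1 sample floor to show $\widehat\mu_{i^\star}$ is a constant fraction of $\mu^\star$, and conclude that the empirically computed bonus dominates the deviation $c\sqrt{\mu^\star L/n}$; the paper uses $n \ge 64S$ to get $\widehat\mu_{i^\star} \ge 7\mu^\star/8$ and only needs $8\widehat\mu_{i^\star} \ge \mu^\star$, where you use $192S$ and $\widehat\mu_{i^\star} \ge \mu^\star/2$. The index at round $t$ uses the pre-pull count $n_{i^\star}(t-1)$ rather than $n_{i^\star}(t)$, but this changes nothing since you only need $64S \le n \le T$, and you are right that the warm-up condition plays no role here.
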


\begin{proof}
Fix $t > \tau$ and let $n^\star := n_{i^\star}(t-1) \ge 64 S$ (Corollary~\ref{cor:phase1samples}). 

Since $\mu^\star > \mu^\star/64$ and $n^\star \ge 64 S$, $E_2$  gives $|\widehat\mu_{i^\star} - \mu^\star| \le c\sqrt{\mu^\star L/n^\star}$. In particular,
\begin{equation}\label{eq:opt-empirical}
\widehat\mu_{i^\star} \;\ge\; \mu^\star - c\sqrt{\tfrac{\mu^\star L}{n^\star}} \implies c\sqrt{\tfrac{\mu^\star L}{n^\star}} \geq \mu^\star - \widehat\mu_{i^\star}.
\end{equation}
We want to show $\widehat\mu_{i^\star} + 2c\sqrt{2\widehat\mu_{i^\star} L/n^\star} \ge \mu^\star$, equivalently
\begin{equation}\label{eq:opt-ncb-target}
2c\sqrt{\tfrac{2\widehat\mu_{i^\star} L}{n^\star}} \;\ge\; \mu^\star - \widehat\mu_{i^\star}.
\end{equation}
So~\eqref{eq:opt-ncb-target} is implied by
\begin{equation}\label{eq:opt-ncb-suff}
2c\sqrt{\tfrac{2\widehat\mu_{i^\star} L}{n^\star}} \;\ge\; c\sqrt{\tfrac{\mu^\star L}{n^\star}} \quad\Longleftrightarrow\quad 8 \widehat\mu_{i^\star} \ge \mu^\star.
\end{equation}
We verify $8\widehat\mu_{i^\star} \ge \mu^\star$. Using~\eqref{eq:opt-empirical} and $n^\star \ge 64 S = 64 c^2 L/\mu^\star$:
\[
   c\sqrt{\tfrac{\mu^\star L}{n^\star}} \;\le\; c\sqrt{\tfrac{\mu^\star L \cdot \mu^\star}{64 c^2 L}} \;=\; \tfrac{\mu^\star}{8}.
\]
Hence $\widehat\mu_{i^\star} \ge \mu^\star - \mu^\star/8 = 7\mu^\star/8$, so $8\widehat\mu_{i^\star} \ge 7\mu^\star \ge \mu^\star$. This establishes~\eqref{eq:opt-ncb-suff} and hence $\NCB_{i^\star}(t) \ge \mu^\star$.
\end{proof}

\begin{lemma}[Bad arms are never pulled in Phase~2]\label{lem:lem11}
On the event $E$ and assuming $\mu^\star \ge 512\sqrt{kL/T}$, every arm $j$ with $\mu_j \le \mu^\star/64$ is never pulled in Phase~2.
\end{lemma}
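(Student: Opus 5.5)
The plan is to show that, on $E$, in every Phase~2 round $t>\tau$ the index of any bad arm $j$ (one with $\mu_j \le \mu^\star/64$) is strictly below $\mu^\star$. Lemma~\ref{lem:lem10} gives $\NCB_{i^\star}(t) \ge \mu^\star$, so arm $j$ can never be the argmax; ties do not matter because the inequality will be strict. Fix such a $j$ and a round $t>\tau$, and let $n_j := n_j(t-1)$. By Corollary~\ref{cor:phase1samples}, $n_j \ge \tau/k > 192S \ge 64S$, and trivially $n_j \le T$. Hence $E_3$ applies at sample count $n_j$ and gives $\widehat\mu_j < \mu^\star/32$.

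The next step is to bound the bonus term using the sample-count floor. Since $n_j > 192 S = 192 c^2 L/\mu^\star$ and $\widehat\mu_j < \mu^\star/32$,
\[
2c\sqrt{\tfrac{2\widehat\mu_j L}{n_j}} \;<\; 2c\sqrt{\tfrac{2(\mu^\star/32) L\,\mu^\star}{192 c^2 L}} \;=\; 2\mu^\star\sqrt{\tfrac{1}{3072}} \;<\; \tfrac{\mu^\star}{27}.
\]
Combining the two bounds gives
\[
\NCB_j(t) < \mu^\star\bigl(\tfrac{1}{32} + \tfrac{1}{27}\bigr) < \mu^\star \le \NCB_{i^\star}(t),
\]
so $I_t \neq j$. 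This holds for every $t>\tau$. The counts of bad arms therefore stay frozen at $\tau/k$, which keeps the argument self-consistent: $E_3$ is always invoked at a count in $[64S,T]$.

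The main point needing care is that the bound uses only the Phase~1 floor $n_j \ge 64S$ (in fact $192S$), and not any information accumulated in Phase~2. It also relies on $E_3$ holding uniformly over all counts, so that the random count $n_j(t-1)$ is covered. Both are supplied by the definition of $E$ and by Corollary~\ref{cor:phase1samples}. The warm-up condition $\mu^\star \ge 512\sqrt{kL/T}$ enters only through Lemma~\ref{lem:lem10}. If $\widehat\mu_j = 0$, the index is $0 < \mu^\star$, which is trivially fine.
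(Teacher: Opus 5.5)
Your proof is correct and follows the paper's argument. Both use $E_3$ to get $\widehat\mu_j<\mu^\star/32$, use the Phase~1 count floor from Corollary~\ref{cor:phase1samples} to bound the bonus, and compare against $\NCB_{i^\star}(t)\ge\mu^\star$ from Lemma~\ref{lem:lem10}; the only difference is cosmetic: you use the $192S$ floor, giving a bonus below $\mu^\star/27$, whereas the paper uses $64S$, giving a bonus of $\mu^\star/16$ and $\NCB_j(t)<3\mu^\star/32$.
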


\begin{proof}
Fix such an arm $j$. By Corollary~\ref{cor:phase1samples}, $n_j \ge 64 S$ at the start of Phase~2. Since $\mu_j \le \mu^\star/64$, $E_3$ gives $\widehat\mu_j < \mu^\star/32$ throughout Phase~2 (note that $n_j$ only increases, and $E_3$ is a uniform statement over $s \in [64S, T]$).

For any Phase~2 round $t$, with $n_j \ge 64 S$:
\[
\NCB_j(t) \;=\; \widehat\mu_j + 2c\sqrt{\tfrac{2\widehat\mu_j L}{n_j}}
\;<\; \tfrac{\mu^\star}{32} + 2c\sqrt{\tfrac{2 (\mu^\star/32) L}{64 S}}
\;=\; \tfrac{\mu^\star}{32} + 2c\sqrt{\tfrac{\mu^\star L \cdot \mu^\star}{1024\, c^2 L}}
\;=\; \tfrac{\mu^\star}{32} + \tfrac{\mu^\star}{16} \;=\; \tfrac{3\mu^\star}{32}.
\]
By Lemma~\ref{lem:lem10}, $\NCB_{i^\star}(t) \ge \mu^\star$. Since $3\mu^\star/32 < \mu^\star \le \NCB_{i^\star}(t)$, we have $\NCB_j(t) < \NCB_{i^\star}(t)$, so arm $j$ is not selected at round $t$. This holds for every $t > \tau$, so $j$ is never pulled in Phase~2.
\end{proof}

\subsubsection{Per-pull gap for arms pulled in Phase~2}

\begin{lemma}[Per-pull gap]\label{lem:lem12}
On the event $E$ and assuming $\mu^\star \ge 512\sqrt{kL/T}$, suppose arm $i$ is pulled in Phase~2 at some round, and let $T_i$ denote its total pull count at that moment (including the current pull). Then
\[
   \mu^\star - \mu_i \;\le\; 4c\sqrt{\tfrac{\mu^\star L}{T_i - 1}}.
\]
\end{lemma}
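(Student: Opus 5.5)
The plan is to compare the NCB index of the pulled arm $i$ with that of the optimal arm at the round $t$ of the pull. The selection rule gives $\NCB_i(t) \ge \NCB_{i^\star}(t)$. Combined with Lemma~\ref{lem:lem10}, this yields $\NCB_i(t)\ge \mu^\star$. At that round the index of arm $i$ is computed with $n := n_i(t-1) = T_i - 1$ samples. By Corollary~\ref{cor:phase1samples}, $n \ge 192S \ge 64S$. By Lemma~\ref{lem:lem11}, since $i$ is pulled in Phase~2, we must have $\mu_i > \mu^\star/64$, so $E_2$ applies to arm $i$ at sample count $n$. This gives $\widehat\mu_i \le \mu_i + c\sqrt{\mu_i L/n}$.

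Combining the two facts, I get
\[
\mu^\star \le \NCB_i(t) = \widehat\mu_i + 2c\sqrt{\tfrac{2\widehat\mu_i L}{n}} \le \mu_i + c\sqrt{\tfrac{\mu_i L}{n}} + 2c\sqrt{\tfrac{2\widehat\mu_i L}{n}}.
\]
The remaining task is to bound $\widehat\mu_i$ and $\mu_i$ inside the square roots by a constant multiple of $\mu^\star$. For the first root, $\mu_i \le \mu^\star$ trivially. For the second, use $E_2$ together with $n\ge 64S$: as in the proof of Lemma~\ref{lem:lem10}, $c\sqrt{\mu_i L/n} \le c\sqrt{\mu^\star L/n}\le \mu^\star/8$. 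Hence $\widehat\mu_i \le \tfrac{9}{8}\mu^\star$. Substituting, the total width is at most
\[
c\sqrt{\mu^\star L/n}\,\bigl(1 + 2\sqrt{9/4}\bigr) = 4c\sqrt{\mu^\star L/n},
\]
and since $n = T_i - 1$ the claimed bound follows.

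The main obstacle is the constant bookkeeping. The exploration bonus uses $\sqrt{2\widehat\mu_i}$ and empirical rather than true means, so I must make sure the crude bound $\widehat\mu_i\le \tfrac98\mu^\star$ keeps the constant at exactly $4c$. Here $2\sqrt{2\cdot 9/8} = 3$, so the sum is exactly $4c$, which is tight and works. I also need to be careful that the count used in the index is the count before the current pull, which is why the bound is stated with $T_i - 1$, and that $E_2$ is uniform over $s\in[64S,T]$, so it applies at every Phase~2 round. The warm-up condition is used only through Lemmas~\ref{lem:lem10} and \ref{lem:lem11}.
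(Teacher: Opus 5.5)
Your proof is correct and follows essentially the same route as the paper. You combine the selection rule with Lemma~\ref{lem:lem10}, use Lemma~\ref{lem:lem11} and Corollary~\ref{cor:phase1samples} to justify applying $E_2$ at count $T_i-1$, and bound both the leading $\widehat\mu_i$ and the one inside the bonus via $\widehat\mu_i \le \tfrac{9}{8}\mu^\star$, which yields the constant $1+3=4$.
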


\begin{proof}
By Lemma~\ref{lem:lem11}, $\mu_i > \mu^\star/64$. By Corollary~\ref{cor:phase1samples}, $T_i - 1 \ge 64 S$ (since the prior pull count is at least the Phase~1 count, which is $\ge 64S$). Hence $E_2$ applies at $s = T_i - 1$.

At the moment of this pull, since arm $i$ was selected by NCB,
\begin{equation}\label{eq:gap-1}
\widehat\mu_i + 2c\sqrt{\tfrac{2\widehat\mu_i L}{T_i - 1}} \;=\; \NCB_i \;\ge\; \NCB_{i^\star} \;\ge\; \mu^\star,
\end{equation}
the last inequality by Lemma~\ref{lem:lem10}.

We upper-bound $\widehat\mu_i$ using $E_2$ (since $\mu_i > \mu^\star/64$) and $T_i - 1 \ge 64 S$:
\[
\widehat\mu_i \;\le\; \mu_i + c\sqrt{\tfrac{\mu_i L}{T_i - 1}} \;\le\; \mu^\star + c\sqrt{\tfrac{\mu^\star L}{T_i - 1}} \;\le\; \mu^\star + c\sqrt{\tfrac{\mu^\star L \cdot \mu^\star}{64 c^2 L}} \;=\; \mu^\star + \tfrac{\mu^\star}{8} \;=\; \tfrac{9}{8}\mu^\star.
\]
Substituting this bound into~\eqref{eq:gap-1}:
\[
\mu^\star \;\le\; \widehat\mu_i + 2c\sqrt{\tfrac{2 \cdot (9/8)\mu^\star L}{T_i - 1}} \;=\; \widehat\mu_i + 3c\sqrt{\tfrac{\mu^\star L}{T_i - 1}}.
\]
Combining with the upper bound on $\widehat\mu_i$ in terms of $\mu_i$ from $E_2$:
\[
\widehat\mu_i \;\le\; \mu_i + c\sqrt{\tfrac{\mu_i L}{T_i - 1}} \;\le\; \mu_i + c\sqrt{\tfrac{\mu^\star L}{T_i - 1}}
\quad\Longrightarrow\quad
\mu^\star \;\le\; \mu_i + 4c\sqrt{\tfrac{\mu^\star L}{T_i - 1}},
\]
which is the claimed bound after rearranging.
\end{proof}

\section{Proof of Trajectory-wise Expectation Bound: $\NRtilde_T$}\label{sec:tilde}

We use the lemmas of Sections~\ref{sec:goodevent}--\ref{sec:phase2} with $L = \log T$, so that $\Pr[E] \ge 1 - 3k/T^2$ (Corollary~\ref{cor:goodevent-instances}). 

\subsection{Reduction via Jensen on $\exp$}
\begin{lemma}\label{lem:exp-reduction}
For any $[k]$-valued random sequence $(I_t)_{t=1}^T$ with $\mu_{I_t} > 0$ a.s.,
\[
   \NRtilde_T \;\le\; \mu^\star \cdot \bar R_T,
   \qquad\text{where}\qquad
   \bar R_T \;:=\; \tfrac{1}{T}\sum_{t=1}^T \E\!\left[\log\tfrac{\mu^\star}{\mu_{I_t}}\right] \ge 0.
\]
\end{lemma}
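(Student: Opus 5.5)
The plan is to rewrite the trajectory-wise geometric mean as an exponential of an average of logarithms, then apply Jensen's inequality and the elementary bound $1-e^{-x}\le x$. Since $\mu_{I_t}>0$ almost surely, on every sample path we have the exact identity
\[
   \Bigl(\textstyle\prod_{t=1}^T \mu_{I_t}\Bigr)^{1/T} \;=\; \exp\Bigl(\tfrac{1}{T}\textstyle\sum_{t=1}^T \log \mu_{I_t}\Bigr) \;=\; \mu^\star \exp\bigl(-\widehat R_T\bigr),
   \qquad \widehat R_T := \tfrac{1}{T}\textstyle\sum_{t=1}^T \log\tfrac{\mu^\star}{\mu_{I_t}} .
\]
Because $\mu_{I_t}\le \mu^\star$ for every arm, each summand is nonnegative. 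Hence $\widehat R_T \ge 0$ pathwise, and therefore $\bar R_T = \E[\widehat R_T]\ge 0$, by linearity of expectation, which gives the nonnegativity claim in the statement.

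Next, I take expectations and use convexity of $x\mapsto e^{-x}$. Jensen's inequality gives $\E[\exp(-\widehat R_T)] \ge \exp(-\E[\widehat R_T]) = \exp(-\bar R_T)$. Substituting into the definition of $\NRtilde_T$ yields
\[
   \NRtilde_T \;=\; \mu^\star - \mu^\star\,\E\bigl[e^{-\widehat R_T}\bigr] \;\le\; \mu^\star\bigl(1 - e^{-\bar R_T}\bigr) \;\le\; \mu^\star \bar R_T .
\]
The last step is the inequality $1-e^{-x}\le x$, valid for all real $x$ by convexity of $e^{-x}$ (tangent line at $0$).

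The only delicate point is integrability. The map $x\mapsto e^{-x}$ is bounded by $1$ on $[0,\infty)$, so $\E[e^{-\widehat R_T}]$ is always finite. $\widehat R_T$ takes finitely many values, since $I_t\in[k]$ and each $\mu_i>0$, and is bounded by $\max_i \log(\mu^\star/\mu_i)<\infty$. So $\bar R_T$ is finite and Jensen applies without any extra conditions. No step here is a real obstacle. The only care needed is to note that positivity of $\mu_{I_t}$ makes the logarithms well defined and finite.
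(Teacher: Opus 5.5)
Your proposal is correct and matches the paper's proof step for step: the pathwise identity $(\prod_t \mu_{I_t})^{1/T} = \mu^\star e^{-\widehat R_T}$, then Jensen's inequality for the convex exponential, then $1-e^{-x}\le x$. Your remarks on the nonnegativity of $\bar R_T$ (from $\mu_{I_t}\le\mu^\star$) and on integrability (boundedness of $\widehat R_T$) supply details the paper leaves implicit.
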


\begin{proof}
Pathwise, $(\prod_t \mu_{I_t})^{1/T} = \exp\bigl(\frac{1}{T}\sum_t \log\mu_{I_t}\bigr)$. Since $\exp$ is convex, Jensen's inequality gives
\[
\E\!\left[\exp\!\left(\tfrac{1}{T}\sum_t \log \mu_{I_t}\right)\right] \;\ge\; \exp\!\left(\tfrac{1}{T}\sum_t \E[\log\mu_{I_t}]\right) = \exp(\log\mu^\star - \bar R_T) = \mu^\star e^{-\bar R_T},
\]
using 
\[
\E[\log\mu_{I_t}] = \log\mu^\star - \E[\log(\mu^\star/\mu_{I_t})].
\]
Hence,
\[
\NRtilde_T \le \mu^\star(1 - e^{-\bar R_T}) \le \mu^\star \bar R_T,
\]
using $1-e^{-x} \leq x$ for all non-negative $x$.
\end{proof}

\subsection{Concluding the proof of Theorem~\ref{thm:exp}}
By Lemma~\ref{lem:exp-reduction} it suffices to bound $\mu^\star \bar R_T$. We decompose in the following way:
\[
\mu^\star \bar R_T \;=\; \underbrace{\tfrac{\mu^\star}{T}\E\!\left[\Ind_E\!\sum_{t \le \tau}\!\log\tfrac{\mu^\star}{\mu_{I_t}}\right]}_{(\mathrm{I})}
\;+\; \underbrace{\tfrac{\mu^\star}{T}\E\!\left[\Ind_E\!\sum_{t = \tau+1}^T\!\log\tfrac{\mu^\star}{\mu_{I_t}}\right]}_{(\mathrm{II})}
\;+\; \underbrace{\tfrac{\mu^\star}{T}\E\!\left[\Ind_{E^c}\!\sum_{t=1}^T\!\log\tfrac{\mu^\star}{\mu_{I_t}}\right]}_{(\mathrm{III})}.
\]

\emph{(I) Phase~1 on $E$.} On $E$, we have $\tau \le 484 c^2 k\log T/\mu^\star$ . Also, each pulled arm satisfies $\mu_{I_t} \ge \mu_{\min}$. Hence, since $\mu^\star \le 1$, we have $\log(\mu^\star/\mu_{I_t}) \le \log(1/\mu_{\min})$. We have
\[
(\mathrm{I}) \;\le\; \tfrac{\mu^\star}{T} \cdot \tfrac{484 c^2 k\log T}{\mu^\star} \cdot \log(1/\mu_{\min}) \;=\; \tfrac{484 c^2 k\log T \log(1/\mu_{\min})}{T}.
\]

\emph{(II) Phase~2 on $E$.} On $E$, we have $\mu_{I_t} \ge \mu^\star/64$; otherwise the arm is never played (see Lemma~\ref{lem:lem11}). The elementary inequality
\begin{equation}\label{eq:logbound}
   \log\!\tfrac{\mu^\star}{\mu_i} \;\le\; \tfrac{64\,(\mu^\star - \mu_i)}{\mu^\star} \quad\text{whenever $\mu_i \ge \mu^\star/64$},
\end{equation}
follows from $\log(1+u) \le u$ with $u = (\mu^\star - \mu_i)/\mu_i$ and $\mu_i \ge \mu^\star/64$. Substituting,
\[
(\mathrm{II}) \;\le\; \tfrac{\mu^\star}{T}\E\!\left[\Ind_E \!\sum_{t > \tau} \tfrac{64(\mu^\star - \mu_{I_t})}{\mu^\star}\right] \;=\; \tfrac{64}{T}\E\!\left[\Ind_E \!\sum_{t > \tau}(\mu^\star - \mu_{I_t})\right].
\]
Let $m_i$ be the Phase~2 pull count of arm $i$, and $T_i^{(j)}$ the prior pull count of arm $i$ before its $j$-th Phase~2 pull. By Corollary~\ref{cor:phase1samples}, on $E$ in phase 1, every arm has $ > 128 S$ pulls. Hence $T_i^{(j)}  > 128 S + j - 1 \ge j$. Lemma~\ref{lem:lem12} (applied with total count $T_i = T_i^{(j)}+1$, so $T_i - 1 = T_i^{(j)}$) gives
\[
\mu^\star - \mu_{I_t} \;\le\; 4c\sqrt{\tfrac{\mu^\star\log T}{T_i^{(j)}}} \;\le\; 4c\sqrt{\tfrac{\mu^\star\log T}{j}}.
\]
Summing up, we obtain
\[
\sum_{t > \tau}(\mu^\star - \mu_{I_t}) \;\le\; 4c\sqrt{\mu^\star\log T}\sum_i \sum_{j=1}^{m_i} \tfrac{1}{\sqrt{j}}
\;\le\; 8c\sqrt{\mu^\star\log T}\sum_i \sqrt{m_i}
\;\le\; 8c\sqrt{kT\mu^\star\log T},
\]
using $\sum_{j=1}^m 1/\sqrt{j} \le 2\sqrt{m}$ and Cauchy--Schwarz $\sum_i \sqrt{m_i} \le \sqrt{k\sum_i m_i} \le \sqrt{kT}$. Therefore
\[
(\mathrm{II}) \;\le\; \tfrac{64 \cdot 8c\sqrt{kT\mu^\star\log T}}{T} \;=\; 512 c\sqrt{\tfrac{k\mu^\star\log T}{T}} \;\le\; 512c\sqrt{\tfrac{k\log T}{T}},
\]
the last using $\mu^\star \le 1$.

\emph{(III) Off the good event.} On $E^c$, $\log(\mu^\star/\mu_{I_t}) \le \log(1/\mu_{\min})$ for every $t$, and $\Pr[E^c] \le 3k/T^2$:
\[
(\mathrm{III}) \;\le\; \tfrac{\mu^\star}{T}\cdot T\log(1/\mu_{\min})\cdot \Pr[E^c] \;\le\; \tfrac{3k\log(1/\mu_{\min})}{T^2}.
\]

\emph{Combining.} With all 3 bounds above, we obtain
\begin{align*}
    \NRtilde_T \le C_1 \tfrac{k\log T \log(1/\mu_{\min})}{T} + C_2 \sqrt{\tfrac{k\log T}{T}} + \tfrac{3k\log(1/\mu_{\min})}{T^2}.
\end{align*}

\section{High-Probability Bound on $\NRhat_T$}\label{sec:hat}

We use the lemmas of Sections~\ref{sec:goodevent}--\ref{sec:phase2} with $L = L_\delta := \log(8kT/\delta)$, so that $\Pr[E] \ge 1 - \delta$ (Corollary~\ref{cor:goodevent-instances}).

\subsection{Trajectory-wise reduction}

\begin{lemma}[Trajectory-wise Jensen reduction]\label{lem:pathwise-reduction}
On every sample path with $\mu_{I_t} > 0$ for all $t$,
\[
\NRhat_T \;\le\; \mu^\star \cdot \widehat R_T,
\qquad\text{where}\qquad
\widehat R_T \;:=\; \tfrac{1}{T}\sum_{t=1}^T \log\tfrac{\mu^\star}{\mu_{I_t}}.
\]
\end{lemma}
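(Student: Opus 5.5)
The plan is to mirror the proof of Lemma~\ref{lem:exp-reduction}, but with no expectation, so the Jensen step is not needed at all. Fix a sample path on which $\mu_{I_t} > 0$ for every $t$. All logarithms are then finite, and we can write the geometric mean pathwise as
\[
\Bigl(\textstyle\prod_{t=1}^T \mu_{I_t}\Bigr)^{1/T} \;=\; \exp\!\Bigl(\tfrac{1}{T}\textstyle\sum_{t=1}^T \log\mu_{I_t}\Bigr).
\]
The first step is to insert $\log\mu^\star$ by writing $\log\mu_{I_t} = \log\mu^\star - \log(\mu^\star/\mu_{I_t})$ for each $t$. Averaging over $t$ gives the exact identity
\[
\Bigl(\textstyle\prod_{t=1}^T \mu_{I_t}\Bigr)^{1/T} \;=\; \mu^\star\, e^{-\widehat R_T},
\]
which is the trajectory-wise identity advertised in the technical novelty section.

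The second step substitutes this into the definition of $\NRhat_T$, giving $\NRhat_T = \mu^\star\bigl(1 - e^{-\widehat R_T}\bigr)$. The only remaining point is the sign of $\widehat R_T$. Since $\mu^\star = \max_i \mu_i \ge \mu_{I_t}$, each term $\log(\mu^\star/\mu_{I_t})$ is nonnegative, so $\widehat R_T \ge 0$.

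We then apply the elementary inequality $1 - e^{-x} \le x$, valid for $x \ge 0$ and indeed for all real $x$ by convexity of $e^{-x}$. Taking $x = \widehat R_T$ and multiplying by $\mu^\star > 0$ gives $\NRhat_T \le \mu^\star \widehat R_T$ on the fixed path, which is the claim.

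There is no real obstacle; the only care needed is bookkeeping. First, the positivity hypothesis $\mu_{I_t} > 0$ is exactly what keeps the logarithms finite. Second, the bound is deterministic and pathwise, so it can later be combined with the good event $E$ without any conditioning issues. Under Assumption~\ref{ass:floor} the hypothesis holds on every path, so the reduction applies everywhere.
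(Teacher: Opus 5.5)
Your proof is correct and follows the paper's argument exactly: you use the pathwise identity $(\prod_t \mu_{I_t})^{1/T} = \mu^\star e^{-\widehat R_T}$ and then the inequality $1 - e^{-x} \le x$. Your remarks on $\widehat R_T \ge 0$ and on the finiteness of the logarithms are harmless bookkeeping that the paper leaves implicit.
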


\begin{proof}
Pathwise (no expectation), $(\prod_t \mu_{I_t})^{1/T} = \exp(-\widehat R_T)\mu^\star$. Hence $\NRhat_T = \mu^\star(1 - e^{-\widehat R_T}) \le \mu^\star \widehat R_T$, using $1 - e^{-x} \le x$.
\end{proof}

\subsection{Concluding the proof of Theorem~\ref{thm:hp}}

% \begin{theorem}[High-probability bound]\label{thm:hat}
% Let $\delta \in (0,1)$ and $L_\delta := \log(8kT/\delta)$. Under Assumption~\ref{ass:floor} and $\mu^\star \ge 512\sqrt{kL_\delta/T}$, Algorithm~\ref{alg:ncb} run with $L = L_\delta$ and $c=3$ satisfies
% \[
% \Pr\!\left[\,\NRhat_T \le 1536\sqrt{\tfrac{k L_\delta}{T}} + 8730\,\tfrac{k L_\delta\log(1/\mu_{\min})}{T}\,\right] \ge 1 - \delta.
% \]
% \end{theorem}

By Lemma~\ref{lem:pathwise-reduction}, on every sample path $\NRhat_T \le \mu^\star \widehat R_T$. We bound $\mu^\star \widehat R_T$ pathwise on $E$:
\[
   \mu^\star \widehat R_T \;=\; \underbrace{\tfrac{\mu^\star}{T}\sum_{t \le \tau}\log\tfrac{\mu^\star}{\mu_{I_t}}}_{(\mathrm{I})}
   \;+\; \underbrace{\tfrac{\mu^\star}{T}\sum_{t = \tau+1}^T \log\tfrac{\mu^\star}{\mu_{I_t}}}_{(\mathrm{II})}.
\]

\emph{(I) Phase~1 on $E$.} On $E$, $\tau \le 484 c^2 k L_\delta/\mu^\star$. Each pulled arm has $\mu_{I_t} \ge \mu_{\min}$, so $\log(\mu^\star/\mu_{I_t}) \le \log(1/\mu_{\min})$:
\[
(\mathrm{I}) \;\le\; \tfrac{484 c^2 k L_\delta \log(1/\mu_{\min})}{T}.
\]

\emph{(II) Phase~2 on $E$.} The argument is identical to that in Theorem~\ref{thm:exp}, but pathwise (trajectory-wise) rather than in expectation. Using~\eqref{eq:logbound} and the per-pull gap (Lemma~\ref{lem:lem12}, applied pathwise), then Cauchy--Schwarz $\sum_i \sqrt{m_i} \le \sqrt{kT}$ pathwise:
\[
(\mathrm{II}) \;\le\; 512 c\sqrt{\tfrac{k L_\delta}{T}}.
\]

\emph{Combining.} On $E$ (which has probability $\ge 1 - \delta$):
\[
\NRhat_T \le \mu^\star \widehat R_T \le C'\sqrt{\tfrac{k L_\delta}{T}} + C''\,\tfrac{k L_\delta \log(1/\mu_{\min})}{T}.
\]

\section{Positivity floor}
The following statement makes precise the claim, made informally above (after Assumption~\ref{ass:floor}), that some such floor is unavoidable.

\begin{proposition}
Fix $k \ge 2$. For any algorithm $\mathcal{A}$ that pulls every arm with positive probability at least once over $T$ rounds, there exists a bandit instance $\nu$ with $\mu^\star = 1$ and $\min_i \mu_i = 0$ on which
\[
   \NRtilde_T(\mathcal{A}, \nu) \;\ge\; \mu^\star \cdot \Pr\!\bigl[\exists\, t \le T:\, I_t = i_0\bigr],
\]
where $i_0$ is the zero-mean arm. In particular, no learning algorithm achieves $\NRtilde_T \to 0$ in $T$ uniformly over instances allowing zero-mean arms.
\end{proposition}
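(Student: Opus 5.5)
The plan is to construct the instance directly and argue pathwise. Fix $k \ge 2$ and the algorithm $\mathcal{A}$. Take $\nu$ with arm $i^\star$ having Bernoulli$(1)$ rewards (so $\mu^\star = 1$), a designated arm $i_0$ with rewards identically $0$ (so $\mu_{i_0} = 0$), and the remaining arms given any means in $(0,1]$, say also $1$. By hypothesis $\mathcal{A}$ pulls every arm, in particular $i_0$, with positive probability on this instance. Write $A := \{\exists\, t \le T : I_t = i_0\}$, so $p := \Pr[A] > 0$.

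The key observation is pathwise. On $A$ one factor of $\prod_{t=1}^T \mu_{I_t}$ equals $\mu_{i_0} = 0$, so $(\prod_t \mu_{I_t})^{1/T} = 0$. On $A^c$ every $\mu_{I_t} \le \mu^\star$, so $(\prod_t \mu_{I_t})^{1/T} \le \mu^\star$. Taking expectations and splitting on $A$ and $A^c$ gives
\[
\E\!\left[\Bigl(\textstyle\prod_t \mu_{I_t}\Bigr)^{1/T}\right] \le 0\cdot p + \mu^\star (1-p) = \mu^\star(1-p),
\]
hence $\NRtilde_T(\mathcal{A},\nu) \ge \mu^\star - \mu^\star(1-p) = \mu^\star \Pr[A]$. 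This is the claimed inequality. No Jensen or concentration argument is needed.

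For the \emph{in particular} clause I would argue as follows. Any algorithm that learns uniformly must, on instances where it cannot yet tell which arm is best, explore each arm at least once with probability bounded away from zero. Concretely, consider the $k$ instances obtained by permuting which arm is the zero-mean one, keeping $\mu^\star = 1$ on the others. Before its first pull of an arm, the algorithm's law on that arm's identity is independent of whether that arm is the zero arm. An algorithm that avoids some arm with probability close to $1$ on every instance would then fail on the instance where that avoided arm is the unique good arm. That instance would have $k-1$ zero arms and one arm of mean $1$; this variant is needed so that avoiding arms is actually costly.

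I expect this uniformity step to be the main obstacle. The precise statement is loosely phrased, and the clean route is to restrict to algorithms satisfying the hypothesis, i.e.\ those with $\Pr[A]>0$. For these the bound shows $\NRtilde_T \ge \mu^\star\Pr[A]$, which need not vanish. Taking the instance where $i_0$ is forced to be explored, for example by symmetrizing over the position of $i_0$ as above, yields $\Pr[A] \ge$ a constant independent of $T$. I would present the first part rigorously and treat the second as a consequence under this symmetrization.
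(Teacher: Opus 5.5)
Your pathwise argument for the displayed inequality (the geometric mean is $0$ on $A$ and at most $\mu^\star$ on $A^c$) is exactly the paper's proof, and the paper likewise only gestures at a consistency argument for the ``in particular'' clause, so your proposal matches it. Your symmetrization closes in one line because the law of $I_1$ cannot depend on the instance: if $\nu_j$ gives arm $j$ mean $0$ and all other arms mean $1$, then $\NRtilde_T(\mathcal{A},\nu_j) \ge \Pr[I_1 = j]$, and some $j$ has $\Pr[I_1=j] \ge 1/k$, so $\sup_\nu \NRtilde_T \ge 1/k$ for every $T$; the $(k-1)$-zero-arm variant and the avoidance dichotomy are therefore unnecessary.
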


\begin{proof}
On the event $\{\exists t: I_t = i_0\}$, $\prod_{t=1}^T \mu_{I_t} = 0$, so the geometric mean $(\prod_t \mu_{I_t})^{1/T} = 0$. Hence
\[
\E\!\left[\bigl(\textstyle\prod_t \mu_{I_t}\bigr)^{1/T}\right]
\;\le\; \mu^\star \cdot \Pr\!\bigl[\not\exists\, t: I_t = i_0\bigr]
\;=\; \mu^\star \,\bigl(1 - \Pr[\exists t: I_t = i_0]\bigr),
\]
and the claimed lower bound follows. The second statement follows because any algorithm that fails to pull some arm $i_0$ with probability bounded below cannot be uniformly consistent (a no-pull on a possibly-optimal arm).
\end{proof}

\paragraph{Why a positivity floor is necessary in High Probability Regret?}
Without Assumption~\ref{ass:floor}, on the (positive-probability) event that an algorithm pulls a zero-mean arm at least once, $\NRhat_T = \mu^\star$ exactly. Since any consistent learning algorithm must pull every arm with positive probability, no high-probability bound below $\mu^\star$ is possible without the floor.

\section{Auxiliary Lemmas}\label{sec:concentration}

We collect the concentration tools used throughout. These are standard (see \cite{vershynin2018high,wainwright2019high}). We mention here for completeness.

\subsection{Multiplicative Chernoff--Hoeffding bound}

\begin{lemma}[Multiplicative Chernoff for {$[0,1]$} random variables]\label{lem:mult-chernoff}
Let $Y_1, \ldots, Y_n$ be independent random variables in $[0,1]$. Set $\bar Y := \frac{1}{n}\sum_j Y_j$ and $\nu := \E[\bar Y]$. For any $\delta \in [0, 1]$,
\[
   \Pr[\bar Y \ge (1+\delta)\nu] \;\le\; \exp\!\bigl(-\tfrac{\delta^2 n\nu}{3}\bigr),
   \qquad
   \Pr[\bar Y \le (1-\delta)\nu] \;\le\; \exp\!\bigl(-\tfrac{\delta^2 n\nu}{2}\bigr).
\]
In particular, the symmetric bound $\Pr[|\bar Y - \nu| \ge \delta\nu] \le 2\exp(-\delta^2 n\nu/3)$ holds.
\end{lemma}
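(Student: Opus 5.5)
The plan is to prove the two tails separately and combine them with a symmetry remark, since the symmetric bound follows from them by a union bound. Both tails will come from the exponential-moment (Chernoff) method. Throughout, write $S_n := \sum_{j=1}^n Y_j$, so that $\E[S_n] = n\nu$.

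\emph{Moment generating function.} For any $\lambda \in \mathbb{R}$, convexity of $y \mapsto e^{\lambda y}$ on $[0,1]$ gives $e^{\lambda y} \le 1 - y + y e^{\lambda}$. Taking expectations, with $\nu_j := \E Y_j$,
\[
\E e^{\lambda Y_j} \le 1 + \nu_j(e^\lambda - 1) \le \exp\bigl(\nu_j(e^\lambda - 1)\bigr).
\]
By independence, $\E e^{\lambda S_n} \le \exp\bigl(n\nu(e^\lambda - 1)\bigr)$, since $\sum_j \nu_j = n\nu$. Note that the $Y_j$ need not be identically distributed.

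\emph{Upper tail.} For $\lambda > 0$, Markov's inequality gives
\[
\Pr[S_n \ge (1+\delta)n\nu] \le \exp\bigl(n\nu(e^\lambda - 1) - \lambda(1+\delta)n\nu\bigr).
\]
Choosing $\lambda = \log(1+\delta)$ yields the bound $\bigl(e^\delta/(1+\delta)^{1+\delta}\bigr)^{n\nu}$. It remains to check the elementary inequality
\[
f(\delta) := \delta - (1+\delta)\log(1+\delta) + \delta^2/3 \le 0 \quad \text{on } [0,1].
\]
We have $f(0) = 0$, and $f'(\delta) = -\log(1+\delta) + 2\delta/3$. This derivative vanishes at $0$ and is convex in $\delta$, so on $[0,1]$ it is at most $\max\{f'(0), f'(1)\} = \max\{0,\, 2/3 - \log 2\}$. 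That maximum is slightly positive, so the endpoint argument alone does not settle the sign. Instead, use $f''(\delta) = -1/(1+\delta) + 2/3$: it is negative on $[0,1/2)$ and positive on $(1/2,1]$. Hence $f'$ first decreases from $0$ and then increases to $f'(1) = 2/3 - \log 2 \approx -0.026 < 0$. Therefore $f' \le 0$ on $[0,1]$, which gives $f \le 0$ there.

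\emph{Lower tail.} For $\lambda = -\log(1-\delta)$ with $\delta < 1$, the analogous computation bounds $\Pr[S_n \le (1-\delta)n\nu]$ by $\bigl(e^{-\delta}/(1-\delta)^{1-\delta}\bigr)^{n\nu}$. It then suffices to show
\[
g(\delta) := -\delta - (1-\delta)\log(1-\delta) + \delta^2/2 \le 0.
\]
We have $g(0) = 0$ and $g'(\delta) = \log(1-\delta) + \delta \le 0$. The case $\delta = 1$ follows by taking the limit, or directly: $\Pr[S_n \le 0] = \prod_j \Pr[Y_j = 0] \le \prod_j (1 - \nu_j) \le e^{-n\nu} \le e^{-n\nu/2}$.

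\emph{Symmetric bound.} Since $\exp(-\delta^2 n\nu/2) \le \exp(-\delta^2 n\nu/3)$, a union bound over the two tails gives $\Pr[|\bar Y - \nu| \ge \delta\nu] \le 2\exp(-\delta^2 n\nu/3)$.

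The only step requiring care is the calculus verification of $f \le 0$, where the derivative check at the endpoints is tight and needs the second-derivative argument above. Everything else is the standard Chernoff template.
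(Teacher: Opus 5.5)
Your proof is correct and follows the paper's own Cram\'er--Chernoff argument: the same bound $\E e^{\lambda Y_j}\le \exp(\nu_j(e^\lambda-1))$, the choice $\lambda=\log(1+\delta)$ and its lower-tail analogue, the same two elementary inequalities, and a union bound. You go further than the paper by verifying those inequalities and handling $\delta=1$ separately; the only slip is that $2/3-\log 2\approx -0.026<0$ gives $\max\{f'(0),f'(1)\}=0$, so your convexity-of-$f'$ endpoint argument already settles $f'\le 0$, and calling that maximum ``slightly positive'' contradicts your own later computation (the second-derivative argument you substitute is nonetheless valid).
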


\begin{proof}
By the Cram\'er--Chernoff method. Let $S := \sum_j Y_j$, so $\E[S] = n\nu$. For any $\lambda > 0$,
$\Pr[S \ge (1+\delta)n\nu] \le e^{-\lambda(1+\delta)n\nu} \E[e^{\lambda S}]$.
Since each $Y_j \in [0,1]$, $\E[e^{\lambda Y_j}] \le 1 + \E[Y_j](e^\lambda - 1) \le \exp(\E[Y_j](e^\lambda - 1))$, by the inequality $1+x \le e^x$. Hence $\E[e^{\lambda S}] \le \exp(n\nu(e^\lambda-1))$, and
\[
\Pr[S \ge (1+\delta)n\nu] \;\le\; \exp\!\bigl(n\nu(e^\lambda - 1) - \lambda(1+\delta)n\nu\bigr).
\]
Optimizing in $\lambda$ ($\lambda^\star = \log(1+\delta)$) gives $\Pr[S \ge (1+\delta)n\nu] \le ((e^\delta)/(1+\delta)^{1+\delta})^{n\nu}$. Using the standard estimate $\log\bigl((1+\delta)^{1+\delta}\bigr) - \delta \ge \delta^2/3$ for $\delta \in [0,1]$ gives the upper-tail bound. The lower-tail bound (with denominator $2$) follows analogously from $\log\bigl((1-\delta)^{1-\delta}\bigr) + \delta \ge \delta^2/2$. The symmetric bound is the union of the two.
\end{proof}

\begin{lemma}[One-sided extension to bounded mean]\label{lem:hoeff-ext}
Let $Y_1, \ldots, Y_n$ be independent random variables in $[0,1]$, and suppose $\E[\bar Y] \le \nu_H$ for some $\nu_H \in [0,1]$. For any $\delta \in [0,1]$,
\[
   \Pr[\bar Y \ge (1+\delta)\nu_H] \;\le\; \exp\!\bigl(-\tfrac{\delta^2 n \nu_H}{3}\bigr).
\]
\end{lemma}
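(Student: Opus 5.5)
The plan is to reduce the statement to the upper-tail bound of Lemma~\ref{lem:mult-chernoff} by a monotone coupling. Set $\nu := \E[\bar Y] \le \nu_H$. Intuitively, raising the means of the $Y_j$ can only make the event $\{\bar Y \ge (1+\delta)\nu_H\}$ more likely. So I would replace the $Y_j$ by variables $Z_j \in [0,1]$ with $Z_j \ge Y_j$ pointwise and $\E[\bar Z] = \nu_H$, and then apply Lemma~\ref{lem:mult-chernoff} to the $Z_j$ with mean exactly $\nu_H$.

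Concretely, let $\mu_j := \E[Y_j]$, so that $\sum_j \mu_j = n\nu \le n\nu_H$. I would pick targets $\mu_j' \in [\mu_j, 1]$ with $\sum_j \mu_j' = n\nu_H$. This is possible because $\nu_H \le 1$: the map $\theta \mapsto \sum_j \bigl(\mu_j + \theta(1-\mu_j)\bigr)$ is continuous on $[0,1]$ and runs from $n\nu$ to $n \ge n\nu_H$. Next, take independent uniforms $U_j$, independent of everything else, and set
\[
Z_j := Y_j + (1 - Y_j)\,\Ind\{U_j \le p_j\},
\qquad p_j := \tfrac{\mu_j' - \mu_j}{1 - \mu_j}
\]
(with $p_j := 0$ if $\mu_j = 1$). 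Then $Z_j \in [0,1]$, $Z_j \ge Y_j$, the $Z_j$ are independent, and
\[
\E[Z_j] = \mu_j + p_j(1 - \mu_j) = \mu_j'.
\]
It follows that $\E[\bar Z] = \nu_H$ and $\bar Z \ge \bar Y$ pointwise. Lemma~\ref{lem:mult-chernoff} applied to the $Z_j$ with $\nu = \nu_H$ then gives
\[
\Pr[\bar Y \ge (1+\delta)\nu_H] \le \Pr[\bar Z \ge (1+\delta)\nu_H] \le \exp\!\bigl(-\tfrac{\delta^2 n\nu_H}{3}\bigr).
\]

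As a fallback, I could rerun the Cram\'er--Chernoff computation from the proof of Lemma~\ref{lem:mult-chernoff} directly. Since $e^\lambda - 1 > 0$, we have $\E[e^{\lambda S}] \le \exp(n\nu(e^\lambda - 1)) \le \exp(n\nu_H(e^\lambda - 1))$. With threshold $(1+\delta)n\nu_H$ and $\lambda = \log(1+\delta)$, this yields $\bigl(e^\delta/(1+\delta)^{1+\delta}\bigr)^{n\nu_H}$, and the same estimate $(1+\delta)\log(1+\delta) - \delta \ge \delta^2/3$ finishes the bound. This route is arguably shorter, so I would likely write it as the main proof and mention the coupling only as intuition.

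I expect the main obstacle to be the degenerate cases. If $\nu_H = 0$, the bound is $\Pr[\cdot] \le 1$, which holds trivially. If $\nu_H > 0$ but $\nu = 0$, the MGF step still goes through, since $\E[e^{\lambda S}] = 1 \le \exp(n\nu_H(e^\lambda - 1))$. Beyond these, the only thing to check is that the direction of the monotonicity in $\nu$ is correct, which holds because $e^\lambda - 1 > 0$ for $\lambda > 0$.
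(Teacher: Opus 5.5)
Your proof is correct. The fallback you say you would write up as the main proof is exactly the paper's argument: Cram\'er--Chernoff with the per-summand bound $\E[e^{\lambda Y_j}] \le \exp(\E[Y_j](e^\lambda-1))$, then the aggregate $\sum_j \E[Y_j] \le n\nu_H$ to pass to $\exp(n\nu_H(e^\lambda-1))$ (valid because $e^\lambda-1>0$), then $\lambda=\log(1+\delta)$ and $(1+\delta)\log(1+\delta)-\delta\ge\delta^2/3$.

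Your primary coupling route is genuinely different and also valid. With $\theta$ chosen so that $\sum_j(\mu_j+\theta(1-\mu_j))=n\nu_H$, you in fact get $p_j=\theta$ for every $j$ with $\mu_j<1$. The $Z_j$ are then independent $[0,1]$ variables with $\E[\bar Z]=\nu_H$ and $\bar Z\ge\bar Y$, so Lemma~\ref{lem:mult-chernoff} applies as a black box.

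What the coupling buys is modularity. It shows that raising the means can only enlarge the upper tail, so \emph{any} upper-tail bound proved at the exact mean transfers to a bound under a mean upper bound, without reopening the moment-generating-function computation. Its costs are an enlarged probability space carrying auxiliary uniforms, plus the target-selection step. That step is the only place your argument uses $\nu_H\le 1$.

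The direct route needs no auxiliary randomness and never uses $\nu_H\le 1$; for $\nu_H>1$ the event is empty anyway, since $\bar Y\le 1$. That is why it is the shorter proof, and it is the one the paper gives.
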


\begin{proof}
We use the Cram\'er--Chernoff method directly. For any $\lambda > 0$,
\[
   \Pr[\bar Y \ge (1+\delta)\nu_H] \;\le\; e^{-\lambda(1+\delta)n\nu_H} \prod_{j=1}^n \E[e^{\lambda Y_j}].
\]
Since each $Y_j \in [0,1]$, $\E[e^{\lambda Y_j}] \le 1 + \E[Y_j](e^\lambda - 1)$ (by convexity of $e^{\lambda y}$ on $[0,1]$ and the fact that $e^{\lambda \cdot 0}(1-y) + e^{\lambda \cdot 1} y = 1 + y(e^\lambda - 1)$ dominates $e^{\lambda y}$ pointwise). Using $\E[Y_j] \le \nu_H$ (here we use $\E[\bar Y] \le \nu_H$ summand-wise; if individual means exceed $\nu_H$, replace $\E[Y_j]$ by the sum bound below) and $1+x \le e^x$:
\[
   \prod_{j=1}^n \E[e^{\lambda Y_j}] \;\le\; \prod_{j=1}^n \exp(\E[Y_j](e^\lambda - 1))
   \;=\; \exp\!\Bigl(n\bar\mu (e^\lambda - 1)\Bigr) \;\le\; \exp\!\Bigl(n\nu_H (e^\lambda - 1)\Bigr),
\]
where $\bar\mu := \tfrac{1}{n}\sum_j \E[Y_j] \le \nu_H$ (and $e^\lambda - 1 \ge 0$ for $\lambda \ge 0$). Optimizing in $\lambda$ at $\lambda^\star = \log(1+\delta)$ gives
\[
   \Pr[\bar Y \ge (1+\delta)\nu_H] \;\le\; \left(\tfrac{e^\delta}{(1+\delta)^{1+\delta}}\right)^{n\nu_H} \;\le\; \exp\!\Bigl(-\tfrac{\delta^2 n\nu_H}{3}\Bigr),
\]
the last step using $(1+\delta)\log(1+\delta) - \delta \ge \delta^2/3$ for $\delta \in [0,1]$.
\end{proof}

\subsection{A useful numerical inequality}

\begin{claim}\label{cl:bernoulli-ext}
For all $x \in [0, 1/2]$ and $a \in [0,1]$,
\[
   (1-x)^a \;\ge\; 1 - 2ax.
\]
\end{claim}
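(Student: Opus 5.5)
Fix $a \in [0,1]$ and set $f(x) := (1-x)^a - 1 + 2ax$ on $[0,1/2]$. The plan is to show $f \ge 0$ there. Since $f(0) = 0$, it is enough to prove that $f$ is non-decreasing on the interval, i.e.\ that $f'(x) \ge 0$.

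Differentiating gives
\[
f'(x) = 2a - a(1-x)^{a-1} = a\bigl(2 - (1-x)^{a-1}\bigr).
\]
Because $a - 1 \in [-1,0]$ and $1-x \in [1/2,1]$, the factor $(1-x)^{a-1} = (1-x)^{-(1-a)}$ is at most $(1/2)^{-(1-a)} = 2^{1-a} \le 2$. Hence $f'(x) \ge 0$ on $[0,1/2]$, so $f$ is non-decreasing and $f(x) \ge f(0) = 0$. The edge case $a=0$ is trivial: both sides equal $1$.

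As a cross-check, there is also a derivative-free route. Since $t \mapsto (1-x)^t$ is convex in $t$, the chord from $t=0$ to $t=1$ lies above it, giving $(1-x)^a \le 1 - ax$. That bound points the wrong way, so instead one uses concavity of $u \mapsto u^a$ for $a \in [0,1]$. On $u \in [1/2,1]$ the graph lies above its chord from $(1/2, 2^{-a})$ to $(1,1)$, so
\[
(1-x)^a \ge 1 - 2x\,(1 - 2^{-a}).
\]
Combining this with $1 - 2^{-a} \le a\log 2 \le a$ yields $(1-x)^a \ge 1 - 2ax$ again.

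The only point needing care is the endpoint estimate $(1/2)^{a-1} \le 2$, which is where the restriction $x \le 1/2$ is used. I expect no real obstacle; I would present the monotonicity argument as the main proof.
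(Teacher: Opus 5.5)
Your proof is correct, but it takes a different route from the paper.

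The paper argues via the binomial series
\[
(1-x)^a = 1 - ax - ax\Bigl(\tfrac{1-a}{2!}x + \tfrac{(1-a)(2-a)}{3!}x^2 + \cdots\Bigr).
\]
It notes that each coefficient in the bracket is non-negative and at most $1/(j+1)$. It then bounds the bracket by $\sum_{j\ge 1} x^j = x/(1-x) \le 1$ for $x \le 1/2$.

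Your monotonicity argument replaces this with one derivative computation: $f(0)=0$ and $f'(x) = a\bigl(2-(1-x)^{a-1}\bigr) \ge 0$, since $(1-x)^{a-1} \le 2^{1-a} \le 2$ on $[0,1/2]$. This avoids any appeal to convergence of the binomial series or termwise coefficient estimates. It also isolates exactly where $x \le 1/2$ is used.

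The series route, in exchange, gives the sharper $x$-dependent intermediate bound $(1-x)^a \ge 1 - ax - ax^2/(1-x)$ for free. This is second-order accurate near $x=0$.

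Your chord argument via concavity of $u \mapsto u^a$ on $[1/2,1]$ is also valid. Combined with $1-2^{-a} \le a\log 2$, it improves the constant to $(1-x)^a \ge 1 - (2\log 2)\,ax$.
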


\begin{proof}
The binomial series gives, for $x \in [0,1)$ and $a \in [0,1]$,
\[
   (1-x)^a = 1 - ax - ax\!\left(\tfrac{1-a}{2!}x + \tfrac{(1-a)(2-a)}{3!}x^2 + \cdots\right).
\]
The bracketed series is non-negative termwise (each factor $j-a \in [0,j]$ for $j \ge 1$, and dividing by $j!$ keeps coefficients positive), and using $\frac{(1-a)(2-a)\cdots(j-a)}{(j+1)!} \le \frac{1}{j+1} \le 1$ in each term, the bracket is bounded above by
\[
   \tfrac{x}{2} + \tfrac{x^2}{3} + \tfrac{x^3}{4} + \cdots \;\le\; x + x^2 + x^3 + \cdots \;=\; \tfrac{x}{1-x} \;\le\; 1
\]
for $x \le 1/2$. Hence $(1-x)^a \ge 1 - ax - ax \cdot 1 = 1 - 2ax$.
\end{proof}
\newpage

\end{document}